\let\NAT@parse\undefined
\newtheorem{theorem}{Theorem}
\newtheorem{lemma}{Lemma}
\newtheorem{proposition}{Proposition}
\newtheorem{remark}{Remark}
\newtheorem{problem}{Problem}
\newtheorem{definition}{Definition}
\newcommand{\mdots}{\ifmmode\mathinner{\ldotp\kern-0.2em\ldotp\kern-0.2em\ldotp}\else.\kern-0.13em.\kern-0.13em.\fi}
\begin{document}

\title{Resilient Assignment Using Redundant Robots on Transport Networks with Uncertain Travel Time}

\author{
    \IEEEauthorblockN{Amanda Prorok\\} 
    \IEEEauthorblockA{Department of Computer Science and Technology\\ 
    University of Cambridge, UK\\ 
    {\tt\small asp45@cam.ac.uk}}
}

\maketitle

\begin{abstract}
This paper considers the problem of assigning multiple mobile robots to goals on transport networks with uncertain information about travel times. Our aim is to produce optimal assignments, such that the average waiting time at destinations is minimized. Since noisy travel time estimates result in sub-optimal assignments, we propose a method that offers resilience to uncertainty by making use of redundant robots. 
However, solving the redundant assignment problem optimally is strongly NP-hard. Hence, we exploit structural properties of our mathematical problem formulation to propose a polynomial-time, near-optimal solution. We demonstrate that our problem can be reduced to minimizing a supermodular cost function subject to a matroid constraint. This allows us to develop a greedy algorithm, for which we derive sub-optimality bounds. We demonstrate the effectiveness of our approach with simulations on transport networks, where uncertain edge costs and uncertain node positions lead to noisy travel time estimates.
Comparisons to benchmark algorithms show that our method performs near-optimally and significantly better than non-redundant assignment.
Finally, our findings include results on the benefit of diversity and complementarity in redundant robot coalitions; these insights contribute towards providing resilience to uncertainty through targeted robot team compositions.
\end{abstract}


\renewcommand\abstractname{Note to Practitioners}
\begin{abstract}
This paper is motivated by the problem of assigning mobile robots to goals when travel times from robot origins to goal locations are uncertain. Existing robust assignment methods deal with uncertainty by minimizing risk, or by pre-defining acceptable risk thresholds. In this work, we propose a complementary method that offers resilience to uncertainty by making use of robot redundancy. In other words, we assign more robots than necessary to a given goal, in the expectation that one of the redundant robots will reach the goal faster (than the originally assigned robot). However, solving this redundant assignment problem is computationally intractable for large systems. By characterizing the mathematical problem, we show how the redundant assignment problem can be solved efficiently. 
We apply our assignment algorithm to transport network problems to reduce average waiting times at goal locations, when travel times from vehicle origins to destinations are uncertain. Our results show that exploiting robot redundancy is an effective approach to reducing waiting times. In this work, we build on the premise that time is the primary commodity, and we do not model the additional cost of utilizing redundant robots. Although we do provide an additional complementary problem definition that minimizes the number of robots used, future work should more explicitly address the trade-off between the cost of providing redundancy (e.g., travel costs, robot costs) and performance gains. 
%
\end{abstract}

\begin{IEEEkeywords}
Task assignment, multi-robot systems, submodular optimization.
\end{IEEEkeywords}


\section{Introduction}
The optimal assignment of mobile robots to tasks at different locations is a fundamental problem that has many applications. In particular, the large-scale deployment of robots has the potential of transforming the industries of transport and logistics.
To orchestrate the coordination of the robots, centralized communication architectures have become the norm in various instances;
representative applications include product pickup and delivery~\cite{grippa:2017}, item retrieval in warehouses~\cite{enright:2011}, and mobility-on-demand services~\cite{spieser2016shared,pavone2012robotic}. In general, a solution to this problem can be computed by a centralized unit that collects all robot-to-task assignment costs (e.g., expected travel times) to determine the optimal assignment (e.g., by running the Hungarian algorithm). However, the optimality of this assignment hinges on the accuracy of the assignment cost estimates. Despite our best efforts to model any uncertainties, discrepancies between model assumptions and real-life dynamics may arise. For example, in transport scenarios, a robot may encounter an unexpectedly blocked path, and consequently takes significantly longer to reach its destination than anticipated. 
These discrepancies cause degradations in the system's overall performance, and can lead to cascading effects.

\begin{figure}[tb]
\centering
\psfrag{t}[cc][][0.7][90]{Expected Travel Time [s]}
\psfrag{r}[cc][][0.7]{Random}
\psfrag{u}[cc][][0.7]{Repeated Hung.}
\psfrag{e}[cc][][0.7]{Greedy}
\psfrag{C}[cc][][0.7][90]{Correlation}
\subfigure{\includegraphics[height=5.5cm]{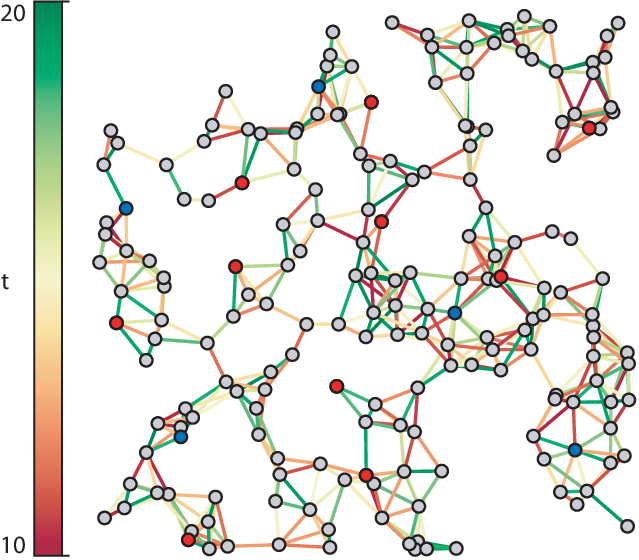}\label{fig:graph_instance_a}}
\caption{An instance of a random transport network. Robots are initially located at hubs (red nodes). Task locations (goals) are marked by nodes colored in blue; The travel time along the edges of this transport network is uncertain, and is modeled by a probability distribution; the colormap indicates the expected travel time.
\label{fig:graph_instance}}
\end{figure}

We are interested in applications that require a fast arrival of robots at their destinations. Due to uncertain robot locations or uncertain path conditions, the knowledge about expected travel times may be imprecise or incomplete. This compounds the difficulty of determining an optimal robot-to-goal assignment. Although assignments under random costs have gained a considerable amount of attention~\cite{krokhmal:2009,Nam:2017jwa,Nam:2015hz}, the focus has primarily been on providing analyses of the performance under noisy conditions. 
In this work, we propose a complementary method that provides resilience to noisy travel time estimates by making use of \emph{robot redundancy}. In other words, the core idea of our work is to exploit redundancy to counter uncertainty and redeem performance. 
Although the idea of engineering robust systems with redundant resources is not new in a broad sense~\cite{kulturel:2003,ghare:1969}, we are the first to consider redundant mechanisms for the problem of mobile robot assignment under uncertainty, with arbitrary and potentially correlated probability distributions. We believe that providing redundant robots will be a fundamental design feature for systems where time is the primary asset (e.g., rescue scenarios), and where an over-provisioning with respect to the number of robots is a minor concern.

The premise of this work is that time is the most valuable asset in the system. Our idea is led by a {\emph{`first-come, first-to-serve'}} principle, by which only the fastest robot to reach a task actually services it. Redundancy allows us to reduce the waiting time at the goal locations: when multiple robots travel to the same destination, only the travel time of the fastest robot counts. Consequently, this paper presents a novel and efficient method by which robots are redundantly matched to individual goals, such that the average waiting time over all goal locations is minimized.

Our problem belongs to the class of Single-Task Robots, Multi-Robot Tasks (ST-MR)~\cite{Gerkey:2004il}, which considers the assignment of groups of robots that have a combined utility for any given task. This task utility can be uniquely defined for a given robot group and task, and is not necessarily linear over the individual robot utilities. The aim is to split the set of robots to form task-specific coalitions, such that the average task utilities over all coalition-to-task assignments is maximized (or, in our case, the average cost minimized). Formally, this can be cast as a set-partitioning problem, which considers a set of robots that needs to be partitioned into a family of subsets with maximum utility over the set of tasks that they are assigned to. The set-partitioning problem is strongly NP-hard~\cite{garey:1978}. Research in this domain has generated heuristic solutions as well as methods that facilitate the combinatorial search~\cite{atamturk:1996,cho:2015,umetani:2015}. 
However, if we pay special attention to the objective function, the problem may reveal additional structure that can be exploited to find near-optimal solutions. In particular, for utility functions that satisfy a property of \emph{diminishing returns}, otherwise known as \emph{submodularity}, near-optimal approximations can be found~\cite{fujishige:2005}. In one of the key contributions of this work, we show that our approach provides diminishing returns in the number of redundant robots assigned to each goal, yielding a \emph{supermodular} cost function. Thus, we pose a set optimization problem under cardinality constraints, and employ a method based on supermodular cost minimization under a matroid constraint. This framework leads to a polynomial-time algorithm that has a provable bound on the difference between the optimal waiting time and the waiting time resulting from our algorithm.


\section{Related work}
\label{sec:related_work}
Our problem is related to the general class of submodular welfare problems~\cite{lehmann:2001,vondrak:2008}, which stems from the field of combinatorial auctions. The welfare problem considers a set of items and a set of players, and seeks a partition of the items into disjoint sets assigned to players in order maximize the total welfare over the players. The \emph{welfare} is equivalent to the sum of utilities over all sets. The utility functions satisfy the property of diminishing returns, and hence, the problem can be formulated as one of submodular maximization.
In contrast to our work, the welfare problem does not prescribe any explicit form for the submodular function, nor does it explicitly consider uncertainty. Instead, it assumes a \emph{value oracle model}, which is a black-box that returns the utility for any given set. 
In this sense, our problem is a specialization of the general submodular welfare problem. We consider a specific objective function, where the submodularity (or, in our case, the supermodularity) arises due to the redundancy of assigned robots with uncertain travel times.
Furthermore, we also provide a specialization of the matroid constraint, which, in our case, relates to the constraint on maximum possible robot deployment sizes.  

Another related body of work deals with the weapon-target assignment problem~\cite{ahuja:2007}, which considers the assignment of weapons to targets so that the total expected survival value of the targets is minimized. Similar to our problem, this problem considers the assignment of a redundant number of items (i.e., weapons) to a single goal (i.e., target), where assignment costs are uncertain (i.e., probability of target survival). 
In contrast to our work, however, the weapon-target assignment problem only applies to binomial distributions that model the outcome (i.e, survival) of each target as a Bernoulli random variable. Our algorithm is capable of dealing with arbitrary (and potentially correlated) probability distributions that describe the assignment costs.
In that sense, our problem is a generalization of the weapon-target assignment problem.

The work in~\cite{golovin:2011} develops an adaptive optimization approach under partial observability of the cost function. The authors introduce the concepts of expected marginal gain and adaptive submodularity, which they leverage to develop an adaptive greedy algorithm. Although these theoretical results are amenable to the types of problems we are interested in (i.e., assignment under uncertainty), they are founded on an orthogonal approach, which consists of adapting cost estimates online, instead of hedging against uncertainty by redundancy a-priori, as we do.

Submodular optimization for combinatorial problems has has also gained considerable traction in the domain of multi-robot systems. Applications include coordinated robot routing for environmental monitoring~\cite{singh:2009}, leader selection in leader-follower systems~\cite{clark:2014}, sensor scheduling for localization~\cite{singh:2017}, path planning for orienteering missions~\cite{jorgensen:2017}, and constrained task allocation~\cite{williams:2017}. Typically, the aforementioned studies develop explicit objective functions and constraints that are specific to the considered problem domains. The authors then go about proving the submodularity property, deriving sub-optimality bounds, and devising the appropriate assignment algorithms. This general methodology is similar to the one presented in our paper.

In summary, the particularity of our work lies in the specificity of our objective function for redundant robot assignments under uncertain travel times on random transport networks. Robot redundancy plays a key role in our theoretical developments; our submodularity property is tightly coupled to travel applications, and hinges on the `first-come, first-to-serve' principle.
Finally, our findings include results on the benefit of diversity and complementarity in redundant robot coalitions; these are unprecedented insights within this context, and contribute towards providing resilience to uncertainty through targeted robot team compositions.
 
\emph{Contributions.}
The main contribution of this work is a supermodular optimization framework that selects redundant robot-to-goal matchings in order to minimize the average waiting time at the task locations. Additionally, we pose a variant of this optimization problem such that the number of assigned robots is minimized, subject to a maximum allowable cost budget. Our framework is underpinned by the novel insight that the deployment of redundant robots under uncertain travel times is supermodular in the number of redundant robots. This insight is derived from a \emph{first-come, first-to-serve} principle, which is formally introduced in this paper through an aggregate function that considers only the \emph{minimum} travel time among the assigned robot coalition. 
Furthermore, we formalize the cardinality and assignment constraints by a matroid. These combined results (optimization of a supermodular function under a matroid constraint) allow us to derive sub-optimality bounds on the performance of our method.

Our second contribution is the development of a dynamic programming algorithm that reduces the number of calls to the objective function, and allows us to compute task utilities efficiently in polynomial time. This algorithm leverages the concept of aggregate functions over robot coalitions, which we formally introduce in this work.
A key consideration is that the performance at each goal is measured by an aggregate cost function that considers the joint performance of all assigned robots at that goal. 

Finally, we apply the redundant assignment solution to random instances of transport networks in planar space. In particular, we consider two causes for uncertain travel time: uncertainty in the robot origin positions, and uncertainty in the road travel time of the transport network (which was considered in our prior work~\cite{prorok:2019DARS}). We compare our method with an optimal, exhaustive combinatorial search algorithm to show that our method performs very close to the optimum. Our approach is also compared to several benchmark algorithms, and is evaluated under varying noise conditions. We show that in all considered cases, redundant assignment effectively increases resilience to uncertainty by reducing the performance gap to an ideal, non-noisy system.

\section{Problem Statement}

We consider a system composed of $M$ goals and $N$ available robots. 
Our problem considers the assignment of robots to goals via a path, where, for each robot-to-goal assignment, multiple possible paths with random costs exist. Without loss of generality, we assume that $K$ possible paths exist between each robot and each goal.
We seek to find a minimum cost matching, such that all goals are covered, any goal may be assigned multiple robots, and each robot uses one of $K$ paths to reach its assigned goal.

The mathematical model and algorithms presented in this paper are general, and can accommodate uncertainties defined on arbitrary task assignment problems. In order to provide specific examples and their implementations, we apply the assignment solution to a transport scenario (in planar space), where imprecise information about robot positions or road travel times leads to uncertainty in the robot travel time estimates (and hence, assignment cost uncertainty).
In specific, we consider two problem variants (see Problem~\ref{prob:problem1} and Problem~\ref{prob:problem2}). The first problem aims to minimize the average assignment cost over all goals, while respecting a limit on the maximum deployment size $N_{\mathrm{d}}$. The second problem aims to minimize the number of deployed robots, while respecting a limit on the maximum admissible average assignment cost, $\xi$. 
For each of these problems, we consider that assignment cost uncertainty arises due to two main causes: \emph{(i)} robot position uncertainty, and \emph{(ii)}, road travel-time uncertainty. Sections~\ref{sec:application_edges} and~\ref{sec:application_nodes} elaborate these scenarios.

\subsection{Redundant Assignment with Random Costs} 
Consider a graph $\mathcal{B}= (\mathcal{U},\mathcal{F}, \mathcal{C})$.
The set of vertices $\mathcal{U}$ is partitioned into two subsets $\mathcal{U}_r$ and $\mathcal{U}_g$, such that $\mathcal{U}_r = \bigcup_i r_i,~i=1,\ldots,N$ contains all robot nodes, $\mathcal{U}_g = \bigcup_j g_j, ~j=1,\ldots, M$ contains all goal nodes, and $\mathcal{U} = \mathcal{U}_r \, \cup \, \mathcal{U}_g$, $\mathcal{U}_r \, \cap \, \mathcal{U}_g = \emptyset$. We define a fixed  number $K$ of possible path routes that lead any robot to any goal. The edge set $\mathcal{F} = \{(i,j,k)| i \in \mathcal{U}_r, j \in \mathcal{U}_g, k \in 1,\ldots,K, \forall \, i,j,k\}$ is complete, meaning that any robot can reach any goal node, and that up to $K$ possible path choices exist for any pair $(i,j)$. We note that for $K=1$, the problem becomes equivalent to a bi-partite graph matching problem with uncertain costs.
The tuple $(i,j,k)$ indicates that robot $i$ is assigned to goal $j$ through path $k$.
Since the travel time for a robot to reach its goal is uncertain, we represent the weight of each edge $(i,j,k)$ by a random variable $C_{ijk} \in \mathcal{C}$, where $\mathcal{C}$ is a set of random variables. The set $\mathcal{C}$ has a joint distribution $\mathcal{D}$.
Hence, $C_{ijk}$ can be arbitrarily defined for any edge; in particular, edge costs may be correlated, and we do not make use of \emph{i.i.d.} assumptions.

We consider the case where an initial non-redundant assignment has been made. The initial assignment is $\mathcal{O} \subset \mathcal{F}$, such that $\forall j |\{i | (i,j,k) \in \mathcal{O} \}| = 1$, $\forall i |\{j | (i,j,k) \in \mathcal{O} \}| \leq 1$, and $\forall i, j |\{k | (i,j,k) \in \mathcal{O} \}| \leq 1$. In other words, $\mathcal{O}$ covers every goal with one robot, and any robot is assigned to at most one goal through one given {path}. 
Given an initial assignment, the aim of this paper is to find an optimal set of assignments $\mathcal{A}^\star \subset \mathcal{F} \setminus \mathcal{O}$ for the remaining $N_{\mathrm{d}} - M$ robots~\footnote{Without $\mathcal{O}$, any solution that is smaller in size than $M$ would lead to an infinite waiting time, and hence, the objective function looses its supermodular property. The assumption that we already have an initial assignment is necessary, for the developments that follow.}. From here on, we denote $\mathcal{F} \setminus \mathcal{O}$ by $\mathcal{F}_\mathcal{O}$.
Fig.~\ref{fig:matching} illustrates a simple robot-to-goal matching.
\begin{figure}[tb]
\centering
\psfrag{A}[cc][][0.8]{$\mathcal{A}$}
\psfrag{O}[cc][][0.8]{$\mathcal{O}$}
\psfrag{i}[cc][][0.8]{$i$}
\psfrag{j}[cc][][0.8]{$j$}
\psfrag{1}[cc][][0.8]{$r_1$}
\psfrag{2}[cc][][0.8]{$r_2$}
\psfrag{3}[cc][][0.8]{$r_3$}
\psfrag{4}[cc][][0.8]{$r_4$}
\psfrag{5}[cc][][0.8]{$r_5$}
\psfrag{6}[cc][][0.8]{$r_6$}
\psfrag{7}[cc][][0.8]{$g_1$}
\psfrag{8}[cc][][0.8]{$g_2$}
\psfrag{a}[cc][][0.8]{$\mathcal{U}_r$}
\psfrag{b}[cc][][0.8]{$\mathcal{U}_g$}
\psfrag{c}[cc][][0.8]{${C}_{621}$}
\psfrag{D}[cc][][0.8]{${C}_{622}$}
\includegraphics[width=0.54\columnwidth]{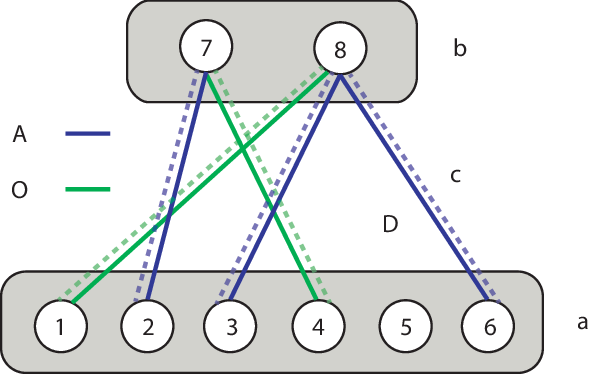}
\caption{Sketch of redundant robot-to-goal matching, for a total of $N=6$ available robots, $M=2$ goals, $K=2$ path options, and a deployment size of $N_{\mathrm{d}}=5$. Edges in $\mathcal{O}$ represent the initial assignment, and edges in $\mathcal{A}$ represent the redundant assignment. The random variable $C_{ijk}$ represents the uncertain travel time it takes robot $i$ to reach goal $j$ via path $k$.
\label{fig:matching}}
\end{figure}

The main novelty of our approach is the use of redundant robots to help counter the adverse effect of uncertainty. If the system admits a sufficiently large number of robots (i.e., $N_{\mathrm{d}} - M > 0$), we can assign multiple robots to the same goal, while still ensuring that all goals are assigned at least one robot. A key consideration is that the performance at each goal is measured by an \emph{aggregate cost} function that considers the joint performance of all assigned robots at that goal.

\begin{definition}[Aggregate Cost] 
\label{def:aggregate}
We define an aggregate function $\Lambda: 2^{\mathcal{F}} \mapsto \mathbb{R}$ that operates over the set of edges incident to a given node, and returns a scalar that represents the aggregate cost over the weights of these edges. If $I_j(\mathcal{A})$ is the set of incident edges to node $j$ in the set of edges $\mathcal{A}$, and is equal to $\{(i,j,k) | \forall i, k \,\mathrm{with}\, (i,j,k) \in \mathcal{A}\}$, then we can write the aggregate cost for goal $j$ as 
\begin{equation}
\Lambda(I_j(\mathcal{A})). 
\end{equation}
\end{definition}
In the following, we also refer to the set of all assigned robots to one goal as a \emph{robot coalition}, following the terminology originally introduced in~\cite{banerjee:2001} and later adapted to the robot domain in~\cite{ouimet:2013}.

\subsection{Optimization Problems}
The definitions above allow us to formulate our objective function. For a given set $\mathcal{A}$ of edges that define robot to goal assignments, we wish to measure the average aggregate cost over all goals, in expectation over the random assignment costs:
\begin{equation} \label{eq:objective_old}
J_{\mathcal{O}}(\mathcal{A}) = \frac{1}{M} \, \sum_{j=1}^{M} ~\mathop{\mathbb{E}}\limits_{\mathcal{C}}\,\left[ \Lambda(I_j(\mathcal{A \cup \mathcal{O}}))  \right].
\end{equation}
We note that when no redundant robots are deployed, the assignment is reduced to the set $\mathcal{O}$, for which the performance is measured as
\begin{equation} \label{eq:J0}
J_0 = J_{\mathcal{O}}(\emptyset) = \frac{1}{M} \, \sum_{j=1}^{M} ~\mathop{\mathbb{E}}\limits_{\mathcal{C}}\,[C_{ijk} | (i,j,k) \in \mathcal{O}]. 
\end{equation}

For practical reasons, we further specify our problem to cap any robot deployment at a maximum number of assignable robots (constrained by $N_{\mathrm{d}}$, with $M \leq N_{\mathrm{d}} \leq N$). This consideration turns out to be particularly relevant for applications that run continuous assignments with fluctuating task demand, and allows us to keep some robots in reserve for future deployments. Furthermore, it allows operators to limit the cost of redundant robot deployments, for example, by monitoring and capping energy consumption. Formally, we capture this additional constraint by a \emph{matroid}~\cite{oxley2006matroid} (see also Def.~\ref{def:matroid} and Sec.~\ref{sec:matroid}), which is an abstract structure that generalizes the notion of linear independence to set systems.
We formalize this first problem as follows.
\begin{problem}\label{prob:problem1} \emph{Optimal matching of redundant robots under cardinality constraints:}
Given $N$ available robots, $M$ goals with uncertain robot-to-goal assignment costs, and an initial assignment $\mathcal{O}$, find a matching $\mathcal{A} \subseteq \mathcal{F}_{\mathcal{O}}$ of redundant robots to goals such that the average cost over all goals is minimized, and the total number of robots deployed is $N_{\mathrm{d}}$. This is formally stated as:
\begin{eqnarray}
    \label{eq:opt_prob1}
    \mathop{\mathrm{argmin}}\limits_{\mathcal{A} \subseteq \mathcal{F}_{\mathcal{O}}} && J_{\mathcal{O}}(\mathcal{A}) \label{eq:objective1}\\
    \mathrm{subject~to}    
     &&   \forall i |\{ j | (i,j) \in \mathcal{A} \cup \mathcal{O} \} | \leq 1 \label{eq:constraint1}\\
     &&    |\mathcal{A}| = N_{\mathrm{d}} - M \label{eq:constraint2}
     \end{eqnarray}
\end{problem}

Our second problem is closely related to the first. Instead of constraining the maximum deployment size and minimizing the cost, we now constrain the average aggregate cost and minimize the deployment size. The cost budget $\xi$ is pre-defined by the user. Such a constraint is useful when a certain performance guarantee is required, and when we wish to deploy the least possible redundant resources to meet this requirement.
We formalize this second problem as follows.
\begin{problem}\label{prob:problem2} \emph{Optimal matching of redundant robots under a cost budget:}
Given $N$ available robots, $M$ goals with uncertain robot-to-goal assignment costs, and an initial assignment $\mathcal{O}$, find a minimally sized matching $\mathcal{A} \subseteq \mathcal{F}_{\mathcal{O}}$ of redundant robots to goals, such that the average aggregate cost over all goals $J_{\mathcal{O}}(\mathcal{A})$ is smaller than a given cost budget $\xi \in \mathbb{R}$. This is formally stated as:
\begin{eqnarray}
    \label{eq:opt_prob2}
    \mathop{\mathrm{argmin}}\limits_{\mathcal{A} \subseteq \mathcal{F}_{\mathcal{O}}} && |\mathcal{A}| \label{eq:objective2}\\
    \mathrm{subject~to}    
     &&   \forall i |\{ j | (i,j) \in \mathcal{A} \cup \mathcal{O} \} | \leq 1 \\
     &&   J_{\mathcal{O}}(\mathcal{A}) \leq \xi \label{eq:J0constraint}
     \end{eqnarray}
\end{problem}


\section{Application to Transport Networks}
\label{sec:application}

We are interested in applications on transport networks, and use graphs to represent possible robot routes (from origins to goals). 
We represent routes via a weighted directed graph, $\mathcal{G}= (\mathcal{V},\mathcal{E}, \mathcal{W})$. Vertices in the set $\mathcal{V}$ represent geographic locations. Nodes $u$ and $v$ are connected by an edge if $(u, v) \in \mathcal{E}$. We assume the graph $\mathcal{G}$ is a strongly connected graph, i.e., a path exists between any pair of vertices. 
Paths between a same origin $i$ and goal $j$ are distinct if there is at least one edge in one path that is not present in the other path. 

The random variable $C_{ijk}$ captures the estimated travel time for robot $i$ to reach a goal $j$ via path $k$. The following two sections elaborate specific implementations of 
$\mathbb{E}[C_{ijk}]$ for the two causes of uncertainty that we consider in this work. Furthermore, they elucidate a \emph{first-come, first-to-serve} concept, by which only the fastest robot to reach a task actually services it. Redundancy, as defined in Def.~\ref{def:aggregate}, allows us to reduce the waiting time at the goal locations: when multiple robots travel to the same destination, only the travel time of the fastest robot counts. This is specified in the following definition.

\begin{definition}[Effective Waiting Time]
\label{def:waiting_time}
Since only the first robot's arrival defines the effective waiting time at a goal node $j$, the aggregate cost function $\Lambda$ (see Def.~\ref{def:aggregate}) is equivalent to the \emph{minimum} operator. The effective waiting time (cost) at goal $j$ is
\begin{equation}
\Lambda(I_j(\mathcal{A})) \triangleq \min\{ C_{ijk} | (i,j,k) \in I_j(\mathcal{A})\}. \label{eq:min}
\end{equation}
\end{definition}


\subsection{Application to Transport with Uncertain Travel Time}
\label{sec:application_edges}

We consider a robot $i$ at node $r_i$ and a goal $j$ at node $g_j$.
We define a random variable $w_{uv} \in \mathcal{W}$ that represents the time needed to traverse an edge $(u,v)$. The set of weights $\mathcal{W}$ can be sampled from a distribution $\mathcal{D}_w$. 
We formulate thee stimated travel time from node $r_i$ to goal $g_j$ on path $k$ as:
\begin{eqnarray}
\label{eq:cost_edges}
C_{ijk} &=& \sum_{(u,v) \in \mathcal{S}_{r_i,g_j,k}} w_{uv}, \mathrm{~~and~thus~~} \\
\mathbb{E}[C_{ijk}] &=& \sum_{(u,v) \in \mathcal{S}_{r_i,g_j,k}} \mathbb{E}[w_{uv}],
\end{eqnarray}
where $\mathcal{S}_{r_i,g_j,k}$ is the set of edges on path $k$ between nodes $r_i$ and $g_j$. 
The distribution $\mathcal{D}$, from which the $C_{ijk}$ are sampled, is defined through~\eqref{eq:cost_edges} and the distribution $\mathcal{D}_w$.

\emph{First-come, first-to-serve:} The following example illustrates this principle when travel time on transport edges is uncertain.
We consider a simple redundant assignment problem, as illustrated in Fig.~\ref{fig:graph_sketch}, with multiple path choices. In this particular example, robot $\mathcal{R}_1$ has already been assigned to the goal and $\mathcal{R}_2$ must choose between paths B and C. Path B is correlated with path A (they share two edges). Path C appears to take longer, and the intuitive choice would be path B. However, as Fig.~\ref{fig:corr_gaussians} shows, there is a small chance that path C leads to an improved waiting time at the goal. In other words, if robot $\mathcal{R}_2$ selects path C, there is a small chance that it is the first robot to arrive at the goal, improving upon the performance of robot $\mathcal{R}_1$ (i.e., the arrival of robot $\mathcal{R}_2$ at the goal defines the effective waiting time, and it is the \emph{first-to-serve}). 
%
%
\begin{figure}[tb]
\centering
\psfrag{b}[cc][][0.7]{path B}
\psfrag{a}[cc][][0.7]{path A}
\psfrag{c}[cc][][0.7]{path C}
\psfrag{v}[cc][][0.8]{$\mathcal{R}_1$}
\psfrag{x}[cc][][0.8]{$\mathcal{R}_2$}
\psfrag{g}[cc][][0.8]{Goal}
\psfrag{i}[cc][][0.7][90]{PDF}
\psfrag{e}[cc][][0.7]{Travel Time [s]}
\subfigure{\includegraphics[width=0.65\columnwidth]{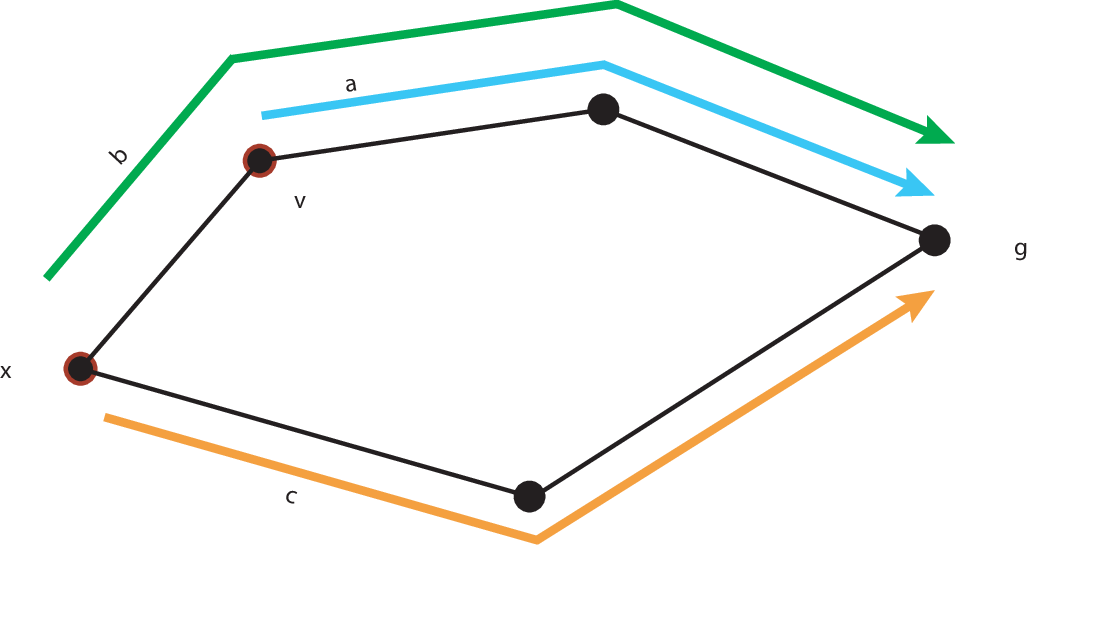}}\\
\subfigure{\includegraphics[width=0.65\columnwidth]{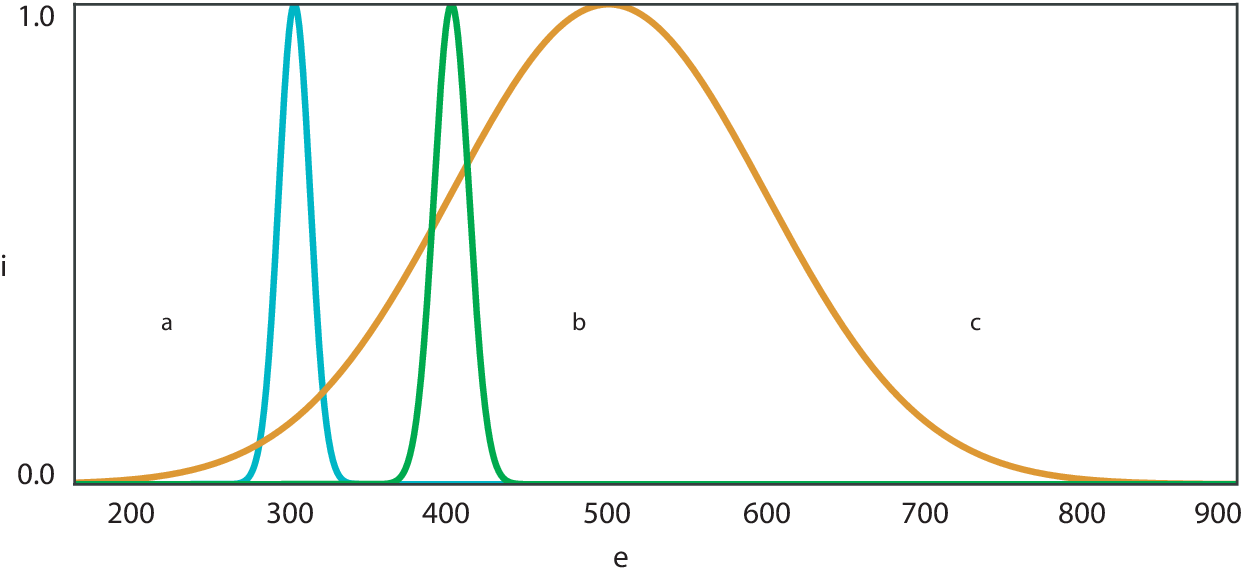}}
\caption{Example scenario. Robot $\mathcal{R}_1$ is assigned to the goal via path A. Robot $\mathcal{R}_2$ is a redundant robot, and must choose between paths B and C.
\label{fig:graph_sketch}}
\end{figure}
\begin{figure}[tb]
\centering
\psfrag{a}[cc][][0.8][90]{Travel time along path \bf{B}}
\psfrag{x}[cc][][0.8]{Travel time along path \bf{A}}
\psfrag{c}[cc][][0.8][90]{Travel time along path \bf{C}}
\subfigure[]{\includegraphics[width=0.45\columnwidth]{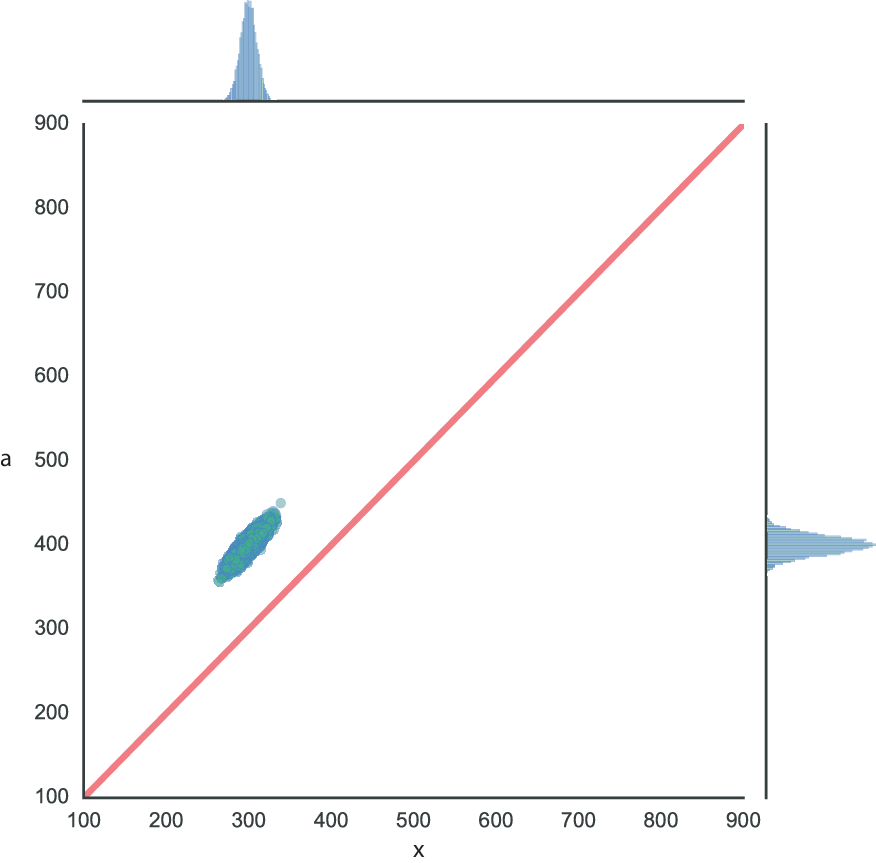}}\hspace{0.3cm}
\subfigure[]{\includegraphics[width=0.45\columnwidth]{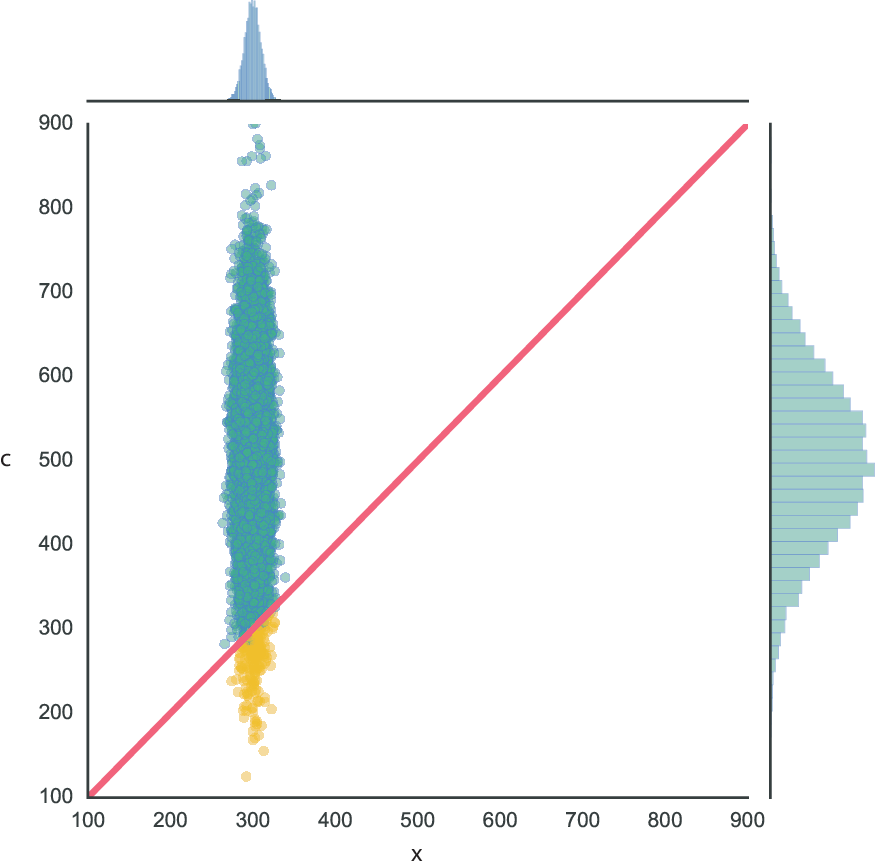}}
\caption{Joint distributions of travel times, for (a) paths A and B, and (b) paths A and C. The Pearson correlation coefficient is (a) 0.89 and (b) 0.0. The red line shows axis equality (i.e., equal travel times). If robot $\mathcal{R}_2$ chooses path C, there is a small chance (see datapoints in yellow) that the waiting time at the goal will be improved upon, despite being slower than path B on average.
\label{fig:corr_gaussians}}
\end{figure}
%


\subsection{Application to Transport with Uncertain Robot Positions}
\label{sec:application_nodes}

We consider that robot $i$ estimates its origin node to be $\tilde{r}_i$. 
The true origins of all robots can hence be defined as random variables $\{{r}_i | i = 1, \ldots, N \}$, sampled from a distribution $\mathcal{D}_v$.
Similar to the above, we formulate the travel time of robot $i$ to goal $j$ on path $k$ as
\begin{eqnarray}
\label{eq:cost_nodes}
C_{ijk} &=& \sum_{(u,v) \in \mathcal{S}_{r_i,g_j,k}} w_{uv},
\end{eqnarray}
where $\mathcal{S}_{r_i,g_j,k}$ is the set of edges on path $k$ between the robot $i$'s possible origin ${r}_i$, and goal $g_j$. 
When $w_{uv}$ are fixed, the only source of uncertainty comes from noisy position estimates of robots at their origin nodes. In this case, the average travel time for robot $i$ is
\begin{eqnarray}
\mathbb{E}[C_{ijk}] &=& \sum_{{r}_i \in \mathcal{V}} P({r}_i | \tilde{r}_i) \sum_{(u,v) \in \mathcal{S}_{{r}_i,g_j,k}} w_{uv},
\end{eqnarray}
where $P(r_i | \tilde{r}_i)$ is the marginal probability that the true origin of robot $i$ is ${r}_i$ when its estimated position is $\tilde{r}_i$. We note that when there is no noise on the transport network edges (i.e., $w_{uv}$ are fixed), we will assume that each robot takes the shortest path to its assigned goal, and hence, $K=1$. 
The distribution $\mathcal{D}$, from which the $C_{ijk}$ are sampled, is defined through~\eqref{eq:cost_nodes} and the distribution $\mathcal{D}_v$.

\begin{figure}[tb]
\centering
\psfrag{i}[cc][][0.9]{Goal}
\psfrag{a}[ct][][0.9]{$\tilde{r}_1$}
\psfrag{u}[ct][][0.9]{$\tilde{r}_2$}
\psfrag{o}[ct][][0.9]{${{r}_1}$}
\psfrag{e}[ct][][0.9]{${{r}_2}$}
\psfrag{c}[ct][][0.9]{${g_1}$}
\psfrag{1}[cc][][0.9]{$\mathcal{D}_v$}
\psfrag{3}[cc][][0.7]{$\mathcal{R}_1$}
\psfrag{4}[cc][][0.7]{$\mathcal{R}_2$}
\includegraphics[width=0.95\columnwidth]{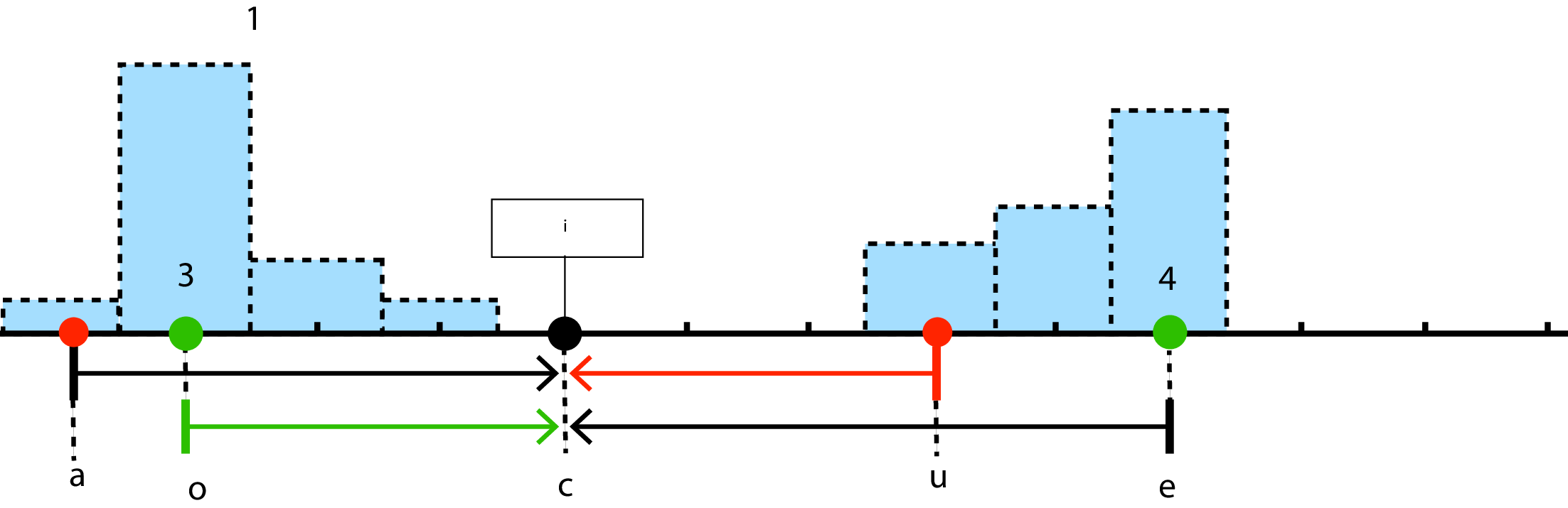}
\caption{One-dimensional sketch of a redundant assignment of two robots to one goal. Based on the noisy positions (in red), robot $\mathcal{R}_2$ is closer to the goal. However, based on the true positions (in green), robot $\mathcal{R}_1$ is closer to the goal, and is effectively the one to service the task at the goal.
\label{fig:application}}
\end{figure}

\emph{First-come, first-to-serve:} Fig.~\ref{fig:application} illustrates this principle when robot positions are uncertain. Based on the noisy position estimates, robot $\mathcal{R}_2$ is closer to the goal, and is assigned to it. A redundant assignment of both robots to the goal reduces the effective waiting time at the goal, with robot $\mathcal{R}_1$ being the first robot to arrive, since, based on the true positions, robot $\mathcal{R}_1$ is closer to the goal than robot $\mathcal{R}_2$.






\section{Method} 
\label{sec:method}
Our method is underpinned by the following key insight: as we assign additional robots to a given goal, the waiting time at that goal decreases by durations of diminishing length.
This property is known as \emph{supermodularity}.
In the following, we begin by introducing the fundamental concepts, and demonstrate the supermodular structure of our problem. Building on this, in Section~\ref{sec:greedy}, we elaborate a polynomial-time algorithm that addresses Problem~\ref{prob:problem1} and Problem~\ref{prob:problem2}.


\subsection{Preliminaries}
\label{sec:background}

A supermodular function is a set function. Given a finite set $\mathcal{F}$, it is defined as $J: 2^{\mathcal{F}} \mapsto \mathbb{R}$, assigning a scalar to any subset of $\mathcal{F}$. 

\begin{definition}[Marginal decrease]
For a finite set $\mathcal{F}$ and a given set function $J: 2^{\mathcal{F}} \mapsto \mathbb{R}$, the marginal decrease of $J$ at a subset $\mathcal{A} \subseteq \mathcal{F}$ with respect to an element $x \in \mathcal{F} \setminus \mathcal{A}$ is:
\begin{eqnarray}
\Delta_{J}(x|\mathcal{A}) \triangleq J(\mathcal{A}) - J(\mathcal{A} \cup \{x\}).
\end{eqnarray}
\end{definition}

\begin{definition}[Supermodular]\label{def:supermodular}
Let $J: 2^{\mathcal{F}} \mapsto \mathbb{R}$ and $\mathcal{A} \subseteq \mathcal{B} \subseteq \mathcal{F}$. The set function $J$ is supermodular if and only if for any $x \in \mathcal{F} \setminus \mathcal{B}$:
\begin{eqnarray}
\Delta_{J}(x| \mathcal{A}) \geq \Delta_{J}(x| \mathcal{B})
\end{eqnarray}
\end{definition}
The definition implies that adding an element $x$ to a set $\mathcal{A}$ results in a larger marginal decrease than when $x$ is added to a superset of $\mathcal{A}$. This property is known as a property of diminishing returns from an added element $x$ as the set it is added to grows larger. 
\begin{remark}\label{remark:submodular}
A function $J$ is submodular if $-J$ is supermodular. 
\end{remark}

\begin{definition}[Monotone non-increasing set function]\label{def:monotone}
A set function $J: 2^{\mathcal{F}} \mapsto \mathbb{R}$ is monotone non-increasing if for any $\mathcal{A} \subseteq \mathcal{B}$, $J(\mathcal{A}) \geq J(\mathcal{B})$.
\end{definition}

\begin{definition}[Matroid] \label{def:matroid}
Given a finite ground set $\mathcal{F}$ and $\mathcal{I} \subseteq 2^{\mathcal{F}}$ a family of subsets of $\mathcal{F}$, an independence system is an ordered pair $(\mathcal{F}, \mathcal{I})$ with the following two properties: {(i)} $\emptyset \in \mathcal{I}$, and {(ii)} for every $\mathcal{A} \in \mathcal{I}$, with $\mathcal{B} \subseteq \mathcal{A}$, implies that $\mathcal{B} \in \mathcal{I}$.
The second property is known as downwards-closed --- in other words, every subset of an independent set is independent. 
An independence system $(\mathcal{F}, \mathcal{I})$ is a matroid if it also satisfies the augmentation property, that is, for every $\mathcal{A}, \mathcal{B} \in \mathcal{I}$, with $|\mathcal{A}| > |\mathcal{B}|$, there exists an element $a \in \mathcal{A} \setminus \mathcal{B}$ such that $\{a\} \cup \mathcal{B} \in \mathcal{I}$.
\end{definition}

For an optimization problem $\min J(\mathcal{A})$ such that $\mathcal{A} \subset \mathcal{I}$, where $J$ is supermodular and $\mathcal{I}$ is an independence system, we can apply a \emph{greedy} approximation algorithm.
This approach works as follows.
At each iteration $k$, an element $a$ is added to the solution set $\mathcal{A}_{k-1}$ such that it maximizes the marginal decrease given this current set,
\begin{equation}
\label{eq:greedy_max}
a \gets \mathrm{argmax}_{ x \in \mathcal{F}_E (\mathcal{A}_{k-1}, \mathcal{I}) } \Delta_J (x|\mathcal{A}_{k-1})
\end{equation}
where $\mathcal{F}_{E}$ denotes the set of eligible elements, given the independence system $\mathcal{I}$ and ground set $\mathcal{F}$, defined as
\begin{equation}
\label{eq:greedy_set}
\mathcal{F}_{E}(\mathcal{A}_{k}, \mathcal{I}) \triangleq \{ x \in \mathcal{F} \setminus \mathcal{A}_k \,|\, \mathcal{A}_k \cup \{ x\} \in \mathcal{I} \}.
\end{equation} 
A key property is that optimization with this algorithm yields a $1/2$ approximation ratio~\cite{fisher:1978}.
In our case, this is equivalent to \emph{Greedy} returning a set $\mathcal{A}_G$ with ratio $J(\mathcal{A}_G) \leq \frac{1}{2} \, (J^\star + J_0)$ where $J^\star$ is the optimal cost, and $J_0$ is the system's baseline performance (without redundant assignments).
For matroid rank $r$ and ground set size $n$, \emph{Greedy} requires $O(nr)$ calls to the objective function.

\subsection{Supermodularity of cost functional}
\label{sec:theorems}

The following derivations prove the supermodularity of our cost function.
\begin{lemma}\label{lemma:sum}(From~\cite{fujishige:2005}, Section 3.1 (c))
Any nonnegative finite weighted sum of supermodular functions is supermodular.
\end{lemma}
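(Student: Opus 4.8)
The plan is to reduce the claim to a termwise inequality by exploiting the linearity of the marginal-decrease operator in its function argument. Write the weighted sum as $J = \sum_{k=1}^{n} \alpha_k J_k$, where each $J_k : 2^{\mathcal{F}} \mapsto \mathbb{R}$ is supermodular and each coefficient satisfies $\alpha_k \geq 0$. First I would record the elementary identity that, for any $\mathcal{A} \subseteq \mathcal{F}$ and any $x \in \mathcal{F} \setminus \mathcal{A}$,
\[
\Delta_{J}(x \mid \mathcal{A}) \;=\; J(\mathcal{A}) - J(\mathcal{A} \cup \{x\}) \;=\; \sum_{k=1}^{n} \alpha_k \bigl( J_k(\mathcal{A}) - J_k(\mathcal{A} \cup \{x\}) \bigr) \;=\; \sum_{k=1}^{n} \alpha_k \, \Delta_{J_k}(x \mid \mathcal{A}),
\]
so that the map sending a set function to its marginal decrease at $\mathcal{A}$ with respect to $x$ is additive over the summands and commutes with scaling.

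Next, I would fix $\mathcal{A} \subseteq \mathcal{B} \subseteq \mathcal{F}$ and $x \in \mathcal{F} \setminus \mathcal{B}$, exactly as required in Definition~\ref{def:supermodular}. Supermodularity of each $J_k$ gives $\Delta_{J_k}(x \mid \mathcal{A}) \geq \Delta_{J_k}(x \mid \mathcal{B})$; since $\alpha_k \geq 0$, multiplying through preserves the inequality, so $\alpha_k \Delta_{J_k}(x \mid \mathcal{A}) \geq \alpha_k \Delta_{J_k}(x \mid \mathcal{B})$ for every $k$. Summing these $n$ inequalities and applying the identity above to each side yields $\Delta_{J}(x \mid \mathcal{A}) \geq \Delta_{J}(x \mid \mathcal{B})$, which is precisely the supermodularity condition for $J$.

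There is no genuine obstacle here; the one point worth flagging is where the hypotheses are actually used. Nonnegativity of the coefficients is exactly what prevents the scaling step from reversing the per-summand inequality — a negative weight would tilt that term the wrong way and break the argument — and finiteness of the sum is what lets us add the termwise inequalities without any convergence concerns. I would therefore present the proof in three short steps: (i) linearity of the marginal decrease in the function argument; (ii) termwise application of supermodularity followed by nonnegative scaling; (iii) summation to recover the claim for $J$.
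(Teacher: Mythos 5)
Your proof is correct. The paper itself gives no proof of this lemma---it is imported verbatim from Fujishige (Section 3.1(c))---so there is no in-paper argument to compare against; your three-step argument (linearity of $\Delta_{J}(x\mid\cdot)$ in the function argument, termwise supermodularity preserved under multiplication by $\alpha_k \geq 0$, finite summation) is the standard derivation and uses exactly the hypotheses in exactly the places you flag. Nothing is missing.
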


\begin{lemma}\label{lemma:min} Given two sets $\mathcal{A}$ and $\mathcal{B}$ that are composed of random variables, we have
$\mathbb{E}[\min \mathcal{A}] \geq \mathbb{E}[\min (\mathcal{A} \cup \mathcal{B})]$.
\end{lemma}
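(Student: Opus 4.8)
The plan is to reduce everything to a pointwise (sample-wise) inequality and then invoke monotonicity of the expectation. The key algebraic fact is that, for any two nonempty finite collections of real numbers, $\min(\mathcal{A}\cup\mathcal{B}) = \min\bigl(\min\mathcal{A},\,\min\mathcal{B}\bigr)$, and in particular $\min(\mathcal{A}\cup\mathcal{B}) \le \min\mathcal{A}$. Since $\mathcal{A}$ and $\mathcal{B}$ are finite sets of random variables defined on a common probability space, for every outcome $\omega$ we have the deterministic inequality
\begin{equation}
\min\{X(\omega) : X \in \mathcal{A}\cup\mathcal{B}\} \;\le\; \min\{X(\omega) : X \in \mathcal{A}\}.
\end{equation}
First I would state this pointwise bound, noting that it does not require any independence assumption between the variables in $\mathcal{A}$ and those in $\mathcal{B}$ — it holds realization by realization.

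Next I would take expectations of both sides. Because $\min\mathcal{A}$ and $\min(\mathcal{A}\cup\mathcal{B})$ are measurable (a finite minimum of measurable functions is measurable) and the collections are finite, the expectations are well defined (in the extended sense if necessary), and monotonicity of expectation immediately yields
\begin{equation}
\mathbb{E}[\min(\mathcal{A}\cup\mathcal{B})] \;\le\; \mathbb{E}[\min\mathcal{A}],
\end{equation}
which is the claim. In the application all travel-time variables are nonnegative, so both sides lie in $[0,+\infty]$ and no integrability pathology arises; I would remark on this only briefly.

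There is essentially no hard step here; the only things to be careful about are implicit non-degeneracy assumptions, which I would flag rather than belabor: $\mathcal{A}$ must be nonempty for $\min\mathcal{A}$ to be defined (in the assignment problem this is guaranteed because the initial assignment $\mathcal{O}$ already covers every goal), and all variables are assumed to live on the same probability space so that the pointwise comparison makes sense. I would close by observing that this lemma is exactly the ingredient needed to show that attaching an additional robot to a goal can only decrease (never increase) that goal's effective waiting time $\Lambda(I_j(\cdot))$, which feeds into both the monotonicity and the supermodularity arguments for $J_{\mathcal{O}}$ in Theorem~\ref{theorem:supermodular}.
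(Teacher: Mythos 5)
Your proof is correct and follows essentially the same route as the paper's: both rest on the pointwise fact that the minimum over $\mathcal{A}\cup\mathcal{B}$ is dominated by the minimum over $\mathcal{A}$, followed by monotonicity of the expectation operator (the paper phrases this via the two-variable case applied to the random variables $\min\mathcal{A}$ and $\min\mathcal{B}$, while you argue realization by realization, which is the same idea). Your added remarks on measurability, non-emptiness, and the absence of independence assumptions are sensible but do not constitute a different argument.
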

\begin{proof}
Consider $X$ and $Y$ two random variables. Since $\min\{X\} \geq \min\{X,Y\}$, 
we have that $\mathbb{E}[ \min  \{X\} ] \geq \mathbb{E}[ \min \{ X,Y \}]$, which follows directly from the monotonicity of the expectation operator.
We can now see that $\mathbb{E}[\min \mathcal{A}] = \mathbb{E}[\min \{\min \mathcal{A}\}] \geq \mathbb{E}[\min \{\min \mathcal{A}, \min \mathcal{B} \}] = \mathbb{E}[\min (\mathcal{A} \cup \mathcal{B})]$, since both $\min \mathcal{A}$ and $\min \mathcal{B}$ are random variables.
\end{proof}

Based on these definitions, we show that our cost function $J_{\mathcal{O}}(\mathcal{A})$ in~\eqref{eq:objective1} and~\eqref{eq:J0constraint} has the following property:
\begin{theorem}\label{theorem:supermodular}
The cost function $J_{\mathcal{O}}(\mathcal{A})$ is a non-increasing supermodular function of the set $\mathcal{F}_\mathcal{O}$.
\end{theorem}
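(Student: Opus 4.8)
The plan is to prove the two asserted properties separately, and in each case to push the set-function inequality down to a \emph{pointwise} inequality between the underlying travel-time random variables, so that the argument is valid for arbitrary (not necessarily independent or i.i.d.) distributions in $\mathcal{C}$. \textbf{Monotonicity} is the easy half: for $\mathcal{A} \subseteq \mathcal{B} \subseteq \mathcal{F}_\mathcal{O}$ and every goal $j$ the incident-edge sets nest, $I_j(\mathcal{A} \cup \mathcal{O}) \subseteq I_j(\mathcal{B} \cup \mathcal{O})$, so Lemma~\ref{lemma:min} applied to the associated sets of random variables gives $\mathbb{E}_{\mathcal{C}}[\Lambda(I_j(\mathcal{A} \cup \mathcal{O}))] \geq \mathbb{E}_{\mathcal{C}}[\Lambda(I_j(\mathcal{B} \cup \mathcal{O}))]$ for each $j$; averaging over the $M$ goals yields $J_\mathcal{O}(\mathcal{A}) \geq J_\mathcal{O}(\mathcal{B})$, which is exactly Definition~\ref{def:monotone}.

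For \textbf{supermodularity} I would decompose $J_\mathcal{O} = \frac{1}{M}\sum_{j=1}^{M} J_j$, where $J_j(\mathcal{A}) \triangleq \mathbb{E}_{\mathcal{C}}[\Lambda(I_j(\mathcal{A} \cup \mathcal{O}))]$ depends on $\mathcal{A}$ only through the edges incident to goal $j$, prove that each $J_j$ is supermodular, and then invoke Lemma~\ref{lemma:sum} to conclude that the nonnegatively weighted sum $J_\mathcal{O}$ is supermodular. To show $J_j$ is supermodular, fix $\mathcal{A} \subseteq \mathcal{B} \subseteq \mathcal{F}_\mathcal{O}$ and $x \in \mathcal{F}_\mathcal{O} \setminus \mathcal{B}$. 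If $x$ is not incident to $j$, then $\Delta_{J_j}(x \mid \mathcal{A}) = \Delta_{J_j}(x \mid \mathcal{B}) = 0$ and there is nothing to prove, so assume $x = (i,j)$. Write $Z \triangleq C_{ij}$, $a \triangleq \min\{C_{kj} : (k,j) \in I_j(\mathcal{A} \cup \mathcal{O})\}$, and $b \triangleq \min\{C_{kj} : (k,j) \in I_j(\mathcal{B} \cup \mathcal{O})\}$. Because $\mathcal{O}$ assigns one edge to every goal, $a$ and $b$ are honest finite-valued random variables, and $I_j(\mathcal{A} \cup \mathcal{O}) \subseteq I_j(\mathcal{B} \cup \mathcal{O})$ forces $b \leq a$ pointwise on the sample space. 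Adding the incident edge $x$ replaces the minimum over goal $j$ by $\min\{a,Z\}$ (resp.\ $\min\{b,Z\}$), so $\Delta_{J_j}(x \mid \mathcal{A}) = \mathbb{E}[a - \min\{a,Z\}] = \mathbb{E}[\max\{a - Z,\,0\}]$ and $\Delta_{J_j}(x \mid \mathcal{B}) = \mathbb{E}[\max\{b - Z,\,0\}]$.

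The crux — and the step I expect to need the most care to state precisely — is the inequality $\mathbb{E}[\max\{a-Z,0\}] \geq \mathbb{E}[\max\{b-Z,0\}]$. It holds because, for every fixed realization, the map $t \mapsto \max\{t - Z,0\}$ is nondecreasing and $a \geq b$, hence $\max\{a-Z,0\} \geq \max\{b-Z,0\}$ pointwise, and monotonicity of the expectation then delivers the claim. This gives $\Delta_{J_j}(x \mid \mathcal{A}) \geq \Delta_{J_j}(x \mid \mathcal{B})$, so $J_j$ is supermodular in the sense of Definition~\ref{def:supermodular}, and Lemma~\ref{lemma:sum} completes the argument. I would close by pointing out exactly where the standing assumption on $\mathcal{O}$ enters: without it, for a small $\mathcal{A}$ the set $I_j(\mathcal{A})$ could be empty and $a = \min\emptyset = +\infty$, which breaks the identity $a - \min\{a,Z\} = \max\{a-Z,0\}$ and with it the diminishing-returns comparison — precisely the subtlety flagged in the footnote of the problem statement.
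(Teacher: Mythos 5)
Your proof is correct and follows essentially the same route as the paper's: decompose $J_{\mathcal{O}}$ per goal via Lemma~\ref{lemma:sum}, reduce supermodularity to a pointwise inequality between minima of the travel-time random variables, and conclude by monotonicity and linearity of expectation. The only difference is cosmetic: where the paper splits into two cases on the relative order of $\min(\mathcal{A}\cup\{x\})$ and $\min(\mathcal{A}\cup\mathcal{Y})$, you package both cases into the single identity $a-\min\{a,Z\}=\max\{a-Z,0\}$ together with the monotonicity of $t\mapsto\max\{t-Z,0\}$ --- a clean streamlining --- and your explicit treatment of edges not incident to $j$ and of where the initial assignment $\mathcal{O}$ is needed is, if anything, slightly more careful than the paper's.
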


\begin{proof}
Based on Lemma \ref{lemma:sum}, it suffices to show that $\mathbb{E}[\Lambda(I_j(\mathcal{A}\cup\mathcal{O}))] = \mathbb{E}[\min\{C_{ijk} | (i,j,k) \in \mathcal{A} \cup \mathcal{O} \}]$ is supermodular for all $j$. 
This is equivalent to showing that $\mathbb{E}[\min \mathcal{A}]$ is a supermodular monotone non-increasing function of $\mathcal{A}$, where $\mathcal{A}$ denotes a set of random variables. 

By Def. \ref{def:supermodular}, this is true if and only if 
{
\begin{eqnarray}\label{eq:super1}
&& \mathbb{E}[\min \mathcal{A} ]  - \mathbb{E}[\min (\mathcal{A} \cup \{x\})] \nonumber \\ 
&& \geq \mathbb{E}[\min \mathcal{B} ]  - \mathbb{E}[\min (\mathcal{B} \cup \{x\})]
\end{eqnarray}}
with $\mathcal{A} \subseteq \mathcal{B} \subseteq \mathcal{F}_{\mathcal{O}}$ and $x \in \mathcal{F}_{\mathcal{O}} \setminus \mathcal{B}$, where $\mathcal{F}_{\mathcal{O}}$ denotes the ground set. Consider $\mathcal{B} = \mathcal{A} \cup \mathcal{Y}$ where $\mathcal{Y} \subset \mathcal{F}_{\mathcal{O}}$. Given the linearity of the expectation operator, we rewrite~\eqref{eq:super1} as
{
\begin{eqnarray}\label{eq:super2}
&& \mathbb{E}[\min \mathcal{A} - \min (\mathcal{A} \cup \{x\})] \nonumber \\
&& \geq \mathbb{E}[\min (\mathcal{A} \cup \mathcal{Y}) - \min (\mathcal{A} \cup \mathcal{Y} \cup \{x\})]
\end{eqnarray}
}
By Lemma~\ref{lemma:min} we know that $\min \mathcal{A} \geq \min (\mathcal{A} \cup \mathcal{Y} \cup \{x\})$. Hence, it remains to show that the difference on the left side of the inequality~\eqref{eq:super2} is greater or equal to the difference on its right side. To do this, we consider the following two possible cases.

\emph{Case 1:} $\min (\mathcal{A} \cup \{x\}) \geq \min (\mathcal{A} \cup \mathcal{Y})$. This implies that $\min (\mathcal{A} \cup \mathcal{Y}) - \min (\mathcal{A} \cup \mathcal{Y} \cup \{x\}) = 0$. By Lemma~\ref{lemma:min}, we have that $\min (\mathcal{A} \cup \{x\}) \leq \min \mathcal{A}$.  Thus, $\min \mathcal{A} - \min(\mathcal{A} \cup \{x\}) \geq 0$, and given the monotonicity of the expectation operator, the inequality in \eqref{eq:super2} holds.

\emph{Case 2:} $\min (\mathcal{A} \cup \{x\}) < \min (\mathcal{A} \cup \mathcal{Y})$. This implies that $\min(\mathcal{A} \cup \mathcal{Y} \cup \{x\}) = \min (\mathcal{A} \cup \{x\})$. By Lemma~\ref{lemma:min}, we have that $\min (\mathcal{A} \cup \mathcal{Y}) \leq \min \mathcal{A}$. Thus, we have that $\min \mathcal{A} - \min (\mathcal{A} \cup \{x\}) \geq \min (\mathcal{A} \cup \mathcal{Y}) - \min (\mathcal{A} \cup \{x\})$, and given the monotonicity of the expectation operator, the inequality in \eqref{eq:super2} holds.

To show that ${J}_\mathcal{O}$ is a monotone non-increasing set function (as defined by Def.~\ref{def:monotone}), it suffices to substitute $\mathbb{E}[\min \mathcal{A}]$ into the inequality of Lemma~\ref{lemma:min}.
\end{proof}
The results above establish our problem of selecting redundant robot assignments to minimize the effective waiting time at destinations as a problem of supermodular minimization. 

\section{Algorithmic Approach} 
\label{sec:greedy}

If we are given a supermodular objective function that satisfies a matroid constraint, we can employ a greedy algorithm to solve our problem within known optimality bounds. However, to maintain the efficiency of such a greedy assignment algorithm, we need to ensure that the evaluation of the objective function itself is efficient (and can be computed in polynomial time). Towards this end, we develop a dynamic programming (DP) approach that hinges on a definition of \emph{incrementally computable functions}, which we apply to our redundant assignment problem. The following paragraphs elaborate our methodology --- first, in Sec.~\ref{sec:matroid}, we show that the matroid constraint applies to our problem setting, and second, in Sec.~\ref{sec:dp_greedy}, we show how Greedy is implemented efficiently through a dynamic programming approach. The resulting routine is shown in Algorithm~\ref{alg1}.

\subsection{Matroid Constraint}
\label{sec:matroid}
In the following we show that the problem of assigning redundant robots with multiple path options satisfies the properties of a matroid. Following constraints~\eqref{eq:constraint1} and~\eqref{eq:constraint2}, our problem considers the matroid $(\mathcal{F}_\mathcal{O}, \mathcal{I}_\mathcal{O})$, with
\begin{eqnarray}
\mathcal{I}_\mathcal{O} \triangleq && \{ \mathcal{A} | \mathcal{A} \subseteq \mathcal{F}_\mathcal{O} \, 
 \land\,  |\mathcal{A}| \leq N_{\mathrm{d}}-M  \, \nonumber \\
 && \land\, \forall i |\{j|(i,j,k)\in \mathcal{A} \cup \mathcal{O}\}| \leq1  \}. \label{eq:IO}
\end{eqnarray}
By the definition of a matroid, any valid assignment must be an element of the family of independent sets $\mathcal{I}_\mathcal{O}$.
Firstly, the empty set is a valid solution, in which case our objective function is reduced to $J_0$, as given by~\eqref{eq:J0}. Secondly, our system is downwards-closed: for any valid robot-to-goal assignment $\mathcal{A} \in \mathcal{I}_\mathcal{O}$, any subset of assignments $\mathcal{B} \subseteq \mathcal{A}$ is also a valid assignment by~\eqref{eq:constraint1}. Thirdly, we can show that our system satisfies the augmentation property. For any two valid assignments $\mathcal{A}$ and $\mathcal{B}$, $|\mathcal{A}| > |\mathcal{B}|$ implies that there is at least one robot assigned to a goal in set $\mathcal{A}$ that is unassigned in set $\mathcal{B}$, irrespective of what path was selected. Hence, adding that robot-to-goal assignment to set $\mathcal{B}$ still satisfies~\eqref{eq:constraint1} and maintains the validity of the solution. We note that the augmentation property implies that all maximal solution sets have the same cardinality $N_{\mathrm{d}} - M$, which corresponds to the \emph{rank} of our matroid. 

\subsection{Greedy Assignment with Dynamic Programming}
\label{sec:dp_greedy}
Our work considers uncertainty models, represented by arbitrary distributions that are also capable of capturing correlations between random variables. Our approach is to consider a sampling-based method that takes $S$ samples from the $MNK$-dimensional joint distribution~\footnote{We note that if an analytical model is known, this can be used instead.} $\mathcal{D}$. 
Our aim is to ensure that the computation of the aggregate cost $\Lambda$ that assembles the performance of the robot coalition (see Def.~\ref{def:aggregate}) does not incur additional complexity that depends on the number of robots $N$, the deployment size $N_{\mathrm{d}}$, or number of tasks $M$. 
Our insight is that, in a number of practical cases, $\Lambda$ is a \emph{distributive aggregate function} and is {incrementally computable}~\cite{palpanas:2002}.
This allows us to implement a dynamic programming approach, as shown in Algorithm~\ref{alg1}.

\begin{definition}[Distributive Aggregate Function]
We define a class $\Delta$ of \uline{distributive} aggregate functions $\delta: A \mapsto \mathbb{R}$, for $A \subset \mathbb{R}$, such that $\delta \in \Delta$ only if $\delta(A \cup x)$ can be computed incrementally, as a function of the old value $\delta(A)$ and new value $x$ only.
\end{definition}

\begin{proposition}\label{theorem:alg_greedy}
Algorithm~\ref{alg1} is a valid instantiation of Greedy, and has complexity $O((N_{\mathrm{d}}-M) N M K S)$, if \emph{(i)} $\Lambda$ is a distributive aggregate function, \emph{(ii)} $J_{\mathcal{O}}$ is supermodular, and \emph{(iii)} $(F_{\mathcal{O}}, \mathcal{I}_\mathcal{O})$ is a matroid constraint.
\end{proposition}
%

\begin{algorithm}[tb]
\caption{{\small Greedy Redundant Assignment with DP}}
\label{alg1}
\begin{algorithmic}[1]
{\small
\REQUIRE Graph $\mathcal{B} = (\mathcal{U},\mathcal{F}, \mathcal{C})$, initial assignment $\mathcal{O}$ \newline
 Problem~\ref{prob:problem1}: size of deployment $N_{\mathrm{d}}$,  cost budget $\xi = 0$ \newline
 Problem~\ref{prob:problem2}: size of deployment $N_{\mathrm{d}} = N$,  cost budget $\xi$ 
\ENSURE Set of edges $\mathcal{A}$ defining redundant assignments
\STATE $\mathcal{A}_G \gets \emptyset$
\STATE $\mathcal{F}_{\mathcal{O}} \gets \mathcal{F} \setminus \mathcal{O} $
\STATE $\mathcal{I}_{\mathcal{O}} \gets $ Eq.~\eqref{eq:IO}
\STATE $\hat{\mathcal{C}} \gets \mathrm{sample}~S~\mathrm{samples~from~} MNK\mathrm{-dim.~distrib.}~\mathcal{D}$
\FOR {$\hat{C}_{ijk} \in \hat{\mathcal{C}}$}
\STATE $\texttt{samples}[(i,j,k)] \gets \hat{C}_{ijk}$
\ENDFOR
\FOR {$(i,j,k) \in \mathcal{O}$}
\STATE $\texttt{state}[j] \gets \texttt{samples}[(i,j,k)]$
\ENDFOR \label{line:end_init}
\FOR{$d \in \{1,\ldots,N_{\mathrm{d}}-M\}$}
\IF {{$J_{\mathcal{O}}(\mathcal{A}_G) \leq \xi$}} \label{line:p2_1}
\STATE {\texttt{break}} 
\ENDIF \label{line:p2_2}
\STATE $\Delta_{J_{\mathcal{O}}}^{\star} \gets -\infty$
\STATE  $\mathcal{F}_{\mathcal{O},E} \gets \{(i,j,k) \in \mathcal{F}_{\mathcal{O}} \setminus \mathcal{A}_G \,| \, \mathcal{A}_G \cup \{(i,j,k)\} \in \mathcal{I}_{\mathcal{O}} \}$ \label{line:eligible}
\FOR{ $(i,j,k) \in \mathcal{F}_{\mathcal{O},E}$}
\STATE $\texttt{curr} \gets \frac{1}{S} \sum_{z=1}^S \texttt{state}[j]_z$
\STATE $\texttt{new} \gets \frac{1}{S} \sum_{z=1}^S \Lambda(\texttt{state}[j]_z, \texttt{samples}[(i,j,k)]_z)$ \label{line:new_state}
\STATE $\Delta_{J_{\mathcal{O}}} \gets \texttt{curr} - \texttt{new}$
\IF {$\Delta_{J_{\mathcal{O}}} > \Delta_{J_{\mathcal{O}}}^\star$}
\STATE {$\Delta_{J_{\mathcal{O}}}^\star \gets \Delta_{J_{\mathcal{O}}}$} \label{line:best}
\STATE $(i^\star,j^\star,k^\star) \gets (i,j,k)$
\ENDIF
\ENDFOR
\STATE $\mathcal{A}_G \gets \mathcal{A}_G \cup (i^\star,j^\star,k^\star)$
\STATE $\texttt{state}[j] \gets \Lambda.(\texttt{state}[j], \texttt{samples}[(i^\star,j^\star,k^\star)])$ \label{line:element}
\ENDFOR
\RETURN $\mathcal{A}_G$
}
\end{algorithmic}
\end{algorithm}

Algorithm~\ref{alg1} works as follows. First, the input designates to options: for Problem~\ref{prob:problem1}, we set a maximum deployment size and do not consider a cost budget; for Problem~\ref{prob:problem2}, the maximum deployment size is limited only by the total number of robots $N$, and the cost budget $\xi$ is set to some feasible value (e.g., within a 1/2 approximation ratio of the optimal performance).
Lines 1-\ref{line:end_init} initialize the data structures. 
In particular, we pre-sample a fixed set of $S$ samples (which amounts to sampling $S$ values for each of the $MNK$ matchings). Pre-sampling allows the algorithm to maintain the supermodularity property.
For the remaining number of robots to be deployed, we proceed with a greedy assignment. We note that lines~\ref{line:p2_1}-~\ref{line:p2_2} relate to Problem~\ref{prob:problem2}, and can be omitted if only Problem~\ref{prob:problem1} is to be solved. Similarly, for Problem~\ref{prob:problem2}, $N_{\mathrm{d}}$ is set to $N$.
Line~\ref{line:eligible} constructs the set of eligible assignments, as in Eq.~\eqref{eq:greedy_set}. Then, for all eligible assignment candidates, we compute the marginal cost decrease incurred by adding that assignment to goal $j$. In order to do this, Line~\ref{line:new_state} computes the new aggregate cost function. This is done incrementally, since $\Lambda$ is a distributive aggregate function. Overall, this inner for-loop is equivalent to Eq.~\eqref{eq:greedy_max}, which allows line~\ref{line:best} to retain the best assignment candidate. We then add the best candidate to the current solution, and  update the aggregate cost incurred at the goal the new robot was assigned to. Line~\ref{line:element} uses an element-wise operator.
Our approach requires $O((N_{\mathrm{d}}-M) N M K)$ calls to the objective function, and the objective function is computed in $O(S)$.

The transport network application described in Sec.~\ref{sec:application} satisfies the conditions in Proposition~\ref{theorem:alg_greedy}: Since~\eqref{eq:min} is supermodular, and the \emph{minimum} operator in Def.~\ref{def:waiting_time} is a distributive aggregate function. It follows that the objective of minimizing the average effective waiting time is supermodular. The matroid constraint is trivially satisfied.

\begin{proposition}\label{theorem:bound_p1}
Consider $J_0$, given by~\eqref{eq:J0}, as the maximum possible cost of the system, when no redundant robots are deployed, and each goal is assigned exactly one robot. Let $J^\star_{\mathcal{O}}$ be the optimal value of~\eqref{eq:objective1}. Then, Algorithm \ref{alg1} returns a set $\mathcal{A}_G$ satisfying
\begin{eqnarray}\label{eq:bound_vanilla}
J_{\mathcal{O}}(\mathcal{A}_G) \leq \frac{1}{2} \, (J_{\mathcal{O}}^\star + J_0).
\end{eqnarray}
The randomized continuous greedy algorithm (as in~\cite{badanidiyuru:2014,filmus:2014}) satisfies
\begin{eqnarray}\label{eq:bound_randomized}
J_{\mathcal{O}}(\mathcal{A}_G) \leq \left(1-\frac{1}{e} - \epsilon \right) \, J_{\mathcal{O}}^\star + \left(\frac{1}{e} + \epsilon \right) \, J_0.
\end{eqnarray}
\end{proposition}

\begin{proof}
Consider the function $Q(\mathcal{A}) = J_0 - J_{\mathcal{O}}(\mathcal{A})$.
Since $J_{\mathcal{O}}(\mathcal{A})$ is non-increasing and supermodular by Theorem~\ref{theorem:supermodular}, then, by Remark~\ref{remark:submodular}, $Q(\mathcal{A})$ is a monotone non-decreasing normalized submodular function. Hence, minimizing $J_{\mathcal{O}}(\mathcal{A})$ is equivalent to maximizing $Q(\mathcal{A})$. 

By Theorem 1.1 of~\cite{fisher:1978}, for a monotone nondecreasing submodular function $Q(\mathcal{A})$ subject to a matroid constraint, the greedy algorithm returns a set $\mathcal{A}_G$ satisfying $Q(\mathcal{A}_G) \geq \, Q^\star/2$, with $Q^\star \triangleq \max\{ Q(\mathcal{A}) | \mathcal{A} \in \mathcal{I}\}$ and where $M=(\mathcal{F}, \mathcal{I})$ is a matroid given by a membership oracle. 
Consequently, the solution $\mathcal{A}_G$ returned by Algorithm \ref{alg1} satisfies $Q(\mathcal{A}_G) \geq \, Q^\star/2$. We substitute the definition of $Q(\mathcal{A})$ into this equation to yield the result in~\eqref{eq:bound_vanilla}.

By Theorem 1.3 of~\cite{badanidiyuru:2014}, for a monotone nondecreasing submodular function $Q(\mathcal{A})$ subject to a matroid constraint, the randomized continuous greedy algorithm returns a set $\mathcal{A}_G$ satisfying $Q(\mathcal{A}_G) \geq (1- 1/e + \epsilon) \, Q^\star$. By the same procedure as above, substituting the definition of $Q(\mathcal{A})$ into this equation yields the result in~\eqref{eq:bound_randomized}.
\end{proof}

For completeness, we also pose a bound for the set size. The following is known to be true~\cite{feige:1998}:
\begin{proposition}[from~\cite{feige:1998,tzoumas:2016}]\label{theorem:bound_p2}
Consider $\mathcal{A}^\star$ as the optimal solution to Problem~\ref{prob:problem2}, and consider $\mathcal{A}_0, \mathcal{A}_1, \ldots$ the sequence of sets selected by Algorithm~\ref{alg1}. Let $k$ be the smallest index such that $J_{\mathcal{O}}(\mathcal{A}_k) \leq \xi$. Then, 
\begin{eqnarray}\label{eq:bound_p2}
\frac{|\mathcal{A}_k|}{|\mathcal{A}^\star|} \leq 1 + \log \frac{J_{\mathcal{O}}(\mathcal{O}) - J_{\mathcal{O}}(\mathcal{\emptyset})} {J_\mathcal{O}(\mathcal{O}) - J_{\mathcal{O}}(\mathcal{\mathcal{A}}_{k-1})}.  
\end{eqnarray}
\end{proposition}
%

\section{Evaluation} 
\label{sec:evaluation}

We evaluate our algorithm through a series of simulations. Section~\ref{sec:benchmarks} introduces the benchmark algorithms against which we compare our method. The following section, Sec.~\ref{sec:optimal}, compares our method to an exhaustive, optimal search strategy. The final two sections, Sec.~\ref{sec:results_node} and~\ref{sec:results_edge}, discuss the performance of our method for node position uncertainty and edge cost uncertainty, respectively. 

As shown in Algorithm~\ref{alg1}, the solutions to Problem~\ref{prob:problem1} and~\ref{prob:problem2} are coupled. In particular, it is possible to find an $N_{\mathrm{d}}$ such that the solution to Problem~\ref{prob:problem1} is the same as to Problem~\ref{prob:problem2}, and vice-versa. Hence, the following results focus mainly on Problem~\ref{prob:problem1}; we include results for Problem~\ref{prob:problem2} at the end of this section.

\begin{figure}
\centering
\psfrag{x}[lc][][0.8]{$N_{\mathrm{d}}$}
\psfrag{z}[lc][][0.8]{$\sigma$}
\psfrag{y}[cc][][0.7][90]{Normalized Waiting Time}
\psfrag{o}[lc][][0.6]{Hungarian}
\psfrag{n}[lc][][0.6]{Bound}
\psfrag{a}[lc][][0.6]{Random}
\psfrag{c}[lc][][0.6]{Best a-post.}
\psfrag{e}[lc][][0.6]{Greedy}
\psfrag{i}[lc][][0.6]{Optimal}
\psfrag{s}[lc][][0.6]{Slice-Greedy}
\psfrag{A}[cc][][0.8][90]{}
\subfigure[]{\includegraphics[height=3.6cm]{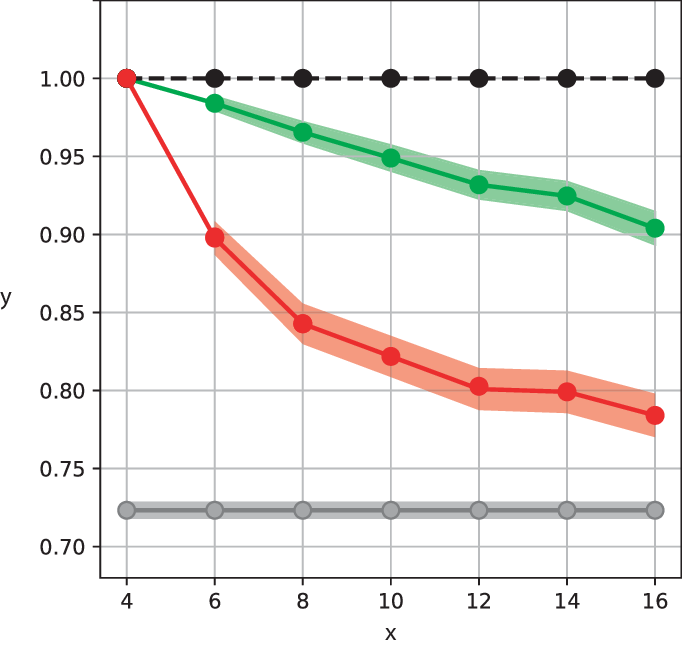}\label{fig:results_node_a}}
\subfigure[]{\includegraphics[height=3.6cm]{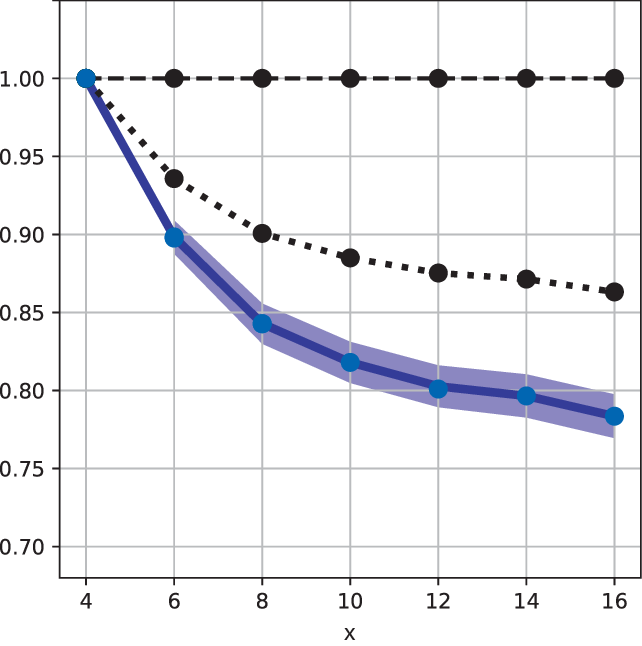}\label{fig:results_node_b}}\\
\subfigure[]{\includegraphics[height=3.6cm]{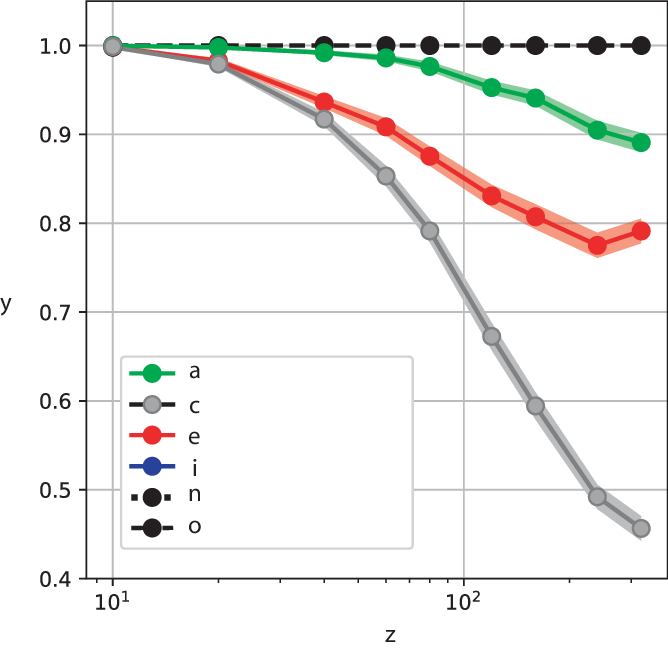}\label{fig:results_node_c}}
\subfigure[]{\includegraphics[height=3.6cm]{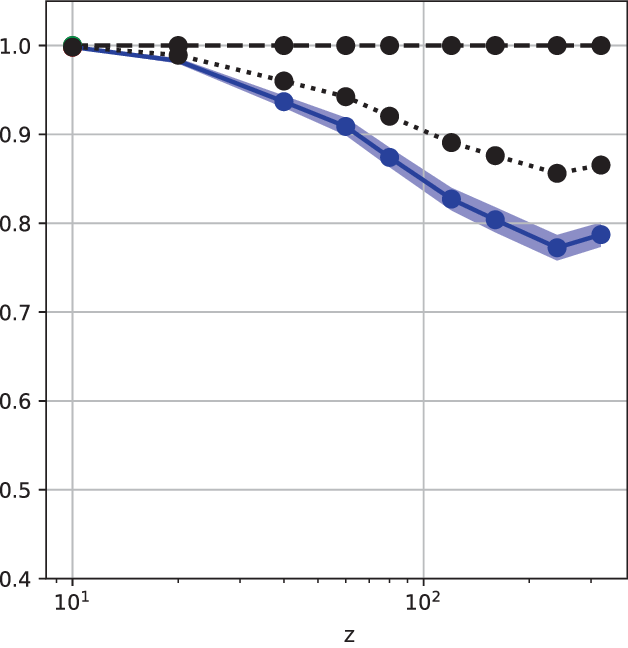}\label{fig:results_node_d}}
\caption{Performance of the assignment algorithm as measured by normalized waiting time, averaged over 500 iterations. The shaded area shows a 95\% confidence interval. The algorithms were evaluated on a $16 \times 16$ grid with a $50$\,m separation. The robot origins are perturbed through added Gaussian noise with $\sigma=100$. We have $N_\mathrm{d}=\{4,6,\ldots,16\}, M=4, N=16$. Goal locations are sampled randomly within the gridmap for each iteration. Waiting times are normalized by the \emph{Hungarian}, and thus show the improvement w.r.t. the initial non-redundant assignment. For clarity, we separate the results into two subfigures: figures (a) and (c) show results for \emph{Greedy}, \emph{Random}, \emph{Best a-posteriori}, and figures (b) and (d) show results for \emph{Optimal} and its derived bound. 
\label{fig:results_node}}
\end{figure}

\subsection{Benchmark Algorithms}
\label{sec:benchmarks}
The performance of our method (\emph{Greedy}) is compared to four alternate assignment algorithms: 
\textbf{(1)} \emph{Hungarian:} We implement the Hungarian method on expected waiting times for a non-redundant assignment of $N_{\mathrm{d}}= M$ robots (i.e., $\mathcal{A}=\emptyset)$. This represents the initial assignment $\mathcal{O}$, and is used as the baseline for all following (redundant) assignment algorithms.
\textbf{(2)} \emph{Random:} A random algorithm assigns the redundant $N_{\mathrm{d}} - M$ robots randomly to goals.
\textbf{(3)} \emph{Repeated Hungarian:} We implement repeated iterations of the Hungarian assignment algorithm (at each iteration, assigning $M$ redundant robots in one go), until the cap $N_{\mathrm{d}} - M$ is reached.
\textbf{(4)} \emph{Best a-posteriori:} This corresponds to the best a-posteriori performance for a given set of robot origins and goal destinations, based on true (observed) travel times, on which we run the Hungarian method with $N_{\mathrm{d}}= M$ robots. 
\textbf{(5)} \emph{Optimal:} We implement an exhaustive search strategy using dynamic programming, which makes ${O}(M 2^N)$ calls to the objective function (assuming an initial assignment $\mathcal{O}$ has already been made)~\footnote{Evaluations show that implementing \emph{Optimal} with a choice of $N$ robots (instead of $N-M$) gives imperceptible performance gains.}.

For each of the simulation series below, we define an underlying noise distribution $\mathcal{D}$ that describes the uncertainty around our travel time estimates. Using this distribution $\mathcal{D}$, we also sample `true' (observed) values, which we use for our performance evaluation. This value is unknown to all algorithms except \emph{Best a-posteriori}.

\subsection{Comparison to Optimal}
\label{sec:optimal}
We perform simulations that compare our algorithm to \emph{Optimal} for $N_\mathrm{d}=\{4,6,\ldots,16\}, M=4, N=16$. For this comparison, we consider uncertainty on robot origin nodes.
For all algorithms except \emph{Best a-posteriori}, we add noise to the origins of the robots. The noise is sampled from a 2D Gaussian, with uncorrelated noise with a standard deviation $\sigma=100$. We add this noise to a given robot origin $r_i$; the noisy position $\tilde{r}_i$ is the node that is closest.
Robot origin and goal locations are randomly positioned on a $16\times16$ regular grid with 50\,m separation. Travels speeds are drawn from a normal distribution with mean 10\,m/s and standard deviation 2\,m/s.

Fig.~\ref{fig:results_node_a} shows the normalized waiting time $J/J_0$, as a function of the maximum deployment size $N_{\mathrm{d}}$. The results show that \emph{Greedy} performs near-optimally and well below the bound, with the mean values coinciding with \emph{Optimal} for all values of $N_\mathrm{d}$. Redundant assignment clearly improves upon non-redundant assignment. Further, for the same deployment size, our algorithm performs significantly better than randomly assigned redundant robots. Fig.~\ref{fig:results_node_b} compares redundant with non-redundant assignment for varying Gaussian noise values. The results show how the improvement of \emph{Greedy} over \emph{Hungarian} increases as the noise increases. For very large noise values (w.r.t. the size of the workspace), the performance of \emph{Random} approaches that of \emph{Greedy}. In summary, these results confirm the benefit of redundant assignments, and demonstrate the near-optimality of the proposed approach.

\subsection{Independent Noise with Node Uncertainty}
\label{sec:results_node}

In this section, we analyze the effect of node position uncertainty (at robot origins), on random transport networks. First, we assume that the noise is independent across the nodes -- this assumption will be relaxed in the following section.
Fig.~\ref{fig:results_rg_node} reports a series of simulations in which we test our algorithm on transport graphs with travel time uncertainty due to uncertain robot positioning. 
We evaluate Algorithm~\ref{alg1} on a set of 500 random undirected connected transport networks with 200 nodes (of which Fig.~\ref{fig:graph_instance_a} shows an example). Our default values are $N=25$ robots, $N_{\mathrm{d}}=20$ robots, $M=5$ goals, $S=200$ samples, $K=1$ path options. The noisy node positions are generated by selecting one node (uniformly at random) out of the set of nearest neighboring nodes based on Euclidean distance, for a given neighborhood size. The default neighborhood size is 4, which means that one out of 5 nodes is randomly sampled. Robots are initially located at 10 randomly selected hubs.

Fig.~\ref{fig:node_rg_a} shows the normalized waiting time as a function of the deployment size. The results show that \emph{Greedy} performs significantly better and \emph{Random} and \emph{Repeated Hungarian}. As the maximum deployment size tends toward the total number of robots, the performance tends toward the ideal setting (\emph{Best a-post.}), and the difference in performance between the algorithm variants decreases.

Fig.~\ref{fig:node_rg_b} shows the normalized waiting time as a function of the positioning noise. \emph{Greedy} performs better than the variant methods, with the performance difference decreasing as the noise increases. This trend is analogous with the trend in Fig.~\ref{fig:results_node_b}.

\begin{figure}[tb]
\centering
\psfrag{a}[lc][][0.7]{Hungarian}
\psfrag{e}[lc][][0.7]{Random}
\psfrag{o}[lc][][0.7]{Repeat. Hung.}
\psfrag{n}[lc][][0.7]{Greedy}
\psfrag{s}[lc][][0.7]{Best a-post.}
\psfrag{x}[cc][][0.7]{$K$ Paths}
\psfrag{m}[cc][][0.7]{Deployment Size $N_{\mathrm{d}}$}
\psfrag{v}[cc][][0.7]{Noise (neighborhood size)}
\psfrag{c}[cc][][0.7][90]{Normalized Waiting Time [s]}
\subfigure[]{\includegraphics[height=4.5cm]{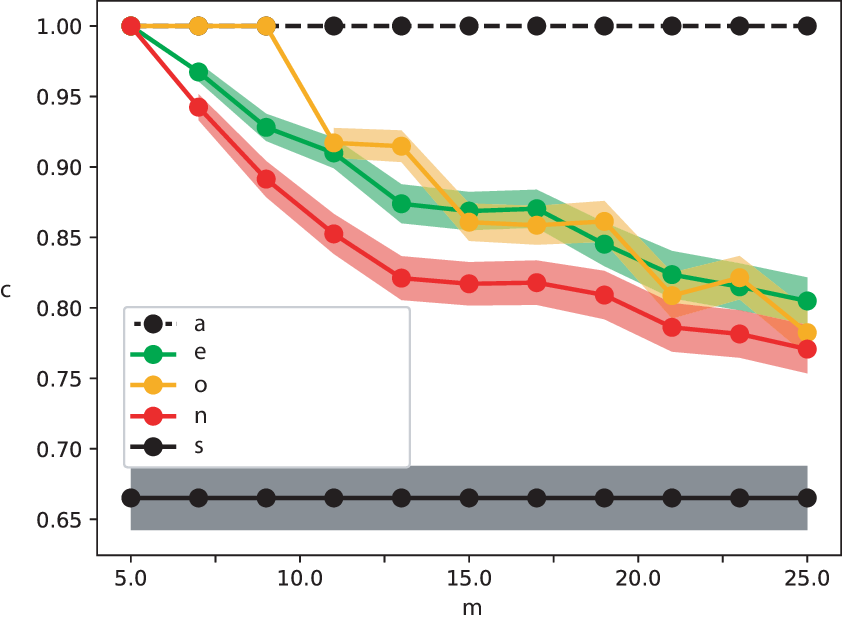}\label{fig:node_rg_a}}\hspace{0.3cm}
\subfigure[]{\includegraphics[height=4.5cm]{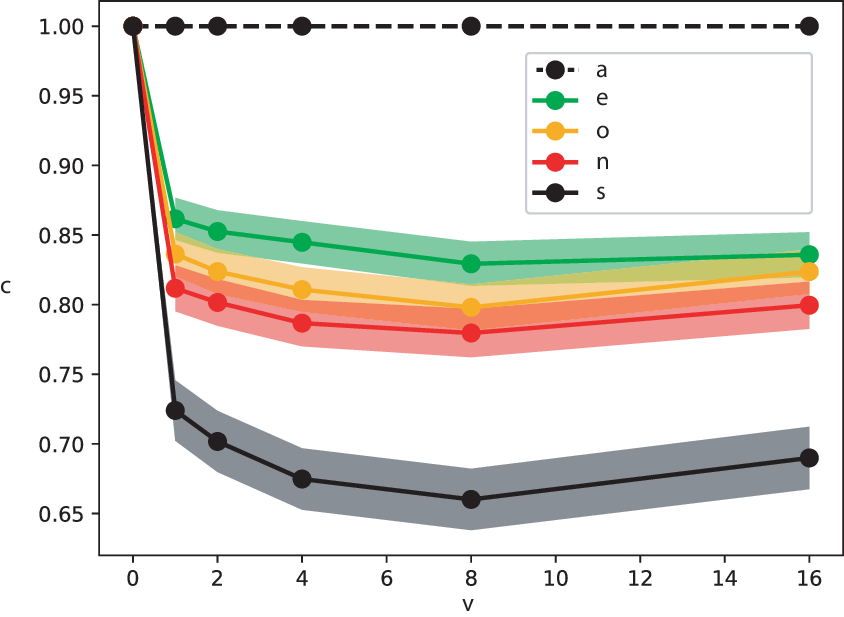}\label{fig:node_rg_b}}
\caption{Performance of the assignment strategies, as measured by normalized waiting time. Each data-point is averaged over 500 runs (random graphs), and the shaded areas show 95\% confidence intervals. (a) Performance as a function of deployment size $N_{\mathrm{d}}$ for $N=25$. (b) Performance as a function of the noise on node positions (implemented through a neighborhood method).
\label{fig:results_rg_node}}
\end{figure}


\subsection{Correlated Noise with Edge Uncertainty}
\label{sec:results_edge}

In this section, we analyze the effect of uncertain edge costs. In other words, the travel time along the edges of the transport network is uncertain.
In particular, we assume that this noise can be correlated across the edges. This assumption reflects real-world settings where travel time uncertainty is affected by causes that are correlated across the transport network (e.g., congestion).

As in Sec.~\ref{sec:results_node}, we evaluate Algorithm~\ref{alg1} on a set of random undirected connected graphs with 200 nodes.
Our default values are $N=25$ robots, $N_{\mathrm{d}}=20$ robots, $M=5$ goals, $S=200$ samples, $K=4$ path options. We generate the $K$ path options by taking the shortest (in average) $K$ paths from $i$ to $j$.
Robots are initially located at 10 randomly selected hubs.
%
The joint distribution of travel times along all $MNK$ possible paths is modeled as a multi-variate Gaussian (truncated at 0) with a mean sampled uniformly at random between 10 and 20. Its covariance matrix is such that the diagonal entries are sampled uniformly between 25 and 100, and the off-diagonal correlation factors are generated using a random lower-triangular matrix corresponding to its Cholesky decomposition. 
This allows us to sample from the underlying distribution $\mathcal{D}$. 

%
\begin{figure}[tb]
\centering
\psfrag{a}[lc][][0.7]{Hungarian}
\psfrag{e}[lc][][0.7]{Random}
\psfrag{o}[lc][][0.7]{Repeat. Hung.}
\psfrag{n}[lc][][0.7]{Greedy}
\psfrag{s}[lc][][0.7]{Best a-post.}
\psfrag{x}[cc][][0.7]{$K$ Paths}
\psfrag{m}[cc][][0.7]{Deployment Size $N_{\mathrm{d}}$}
\psfrag{c}[cc][][0.7][90]{Normalized Waiting Time [s]}
\subfigure[]{\includegraphics[height=4.5cm]{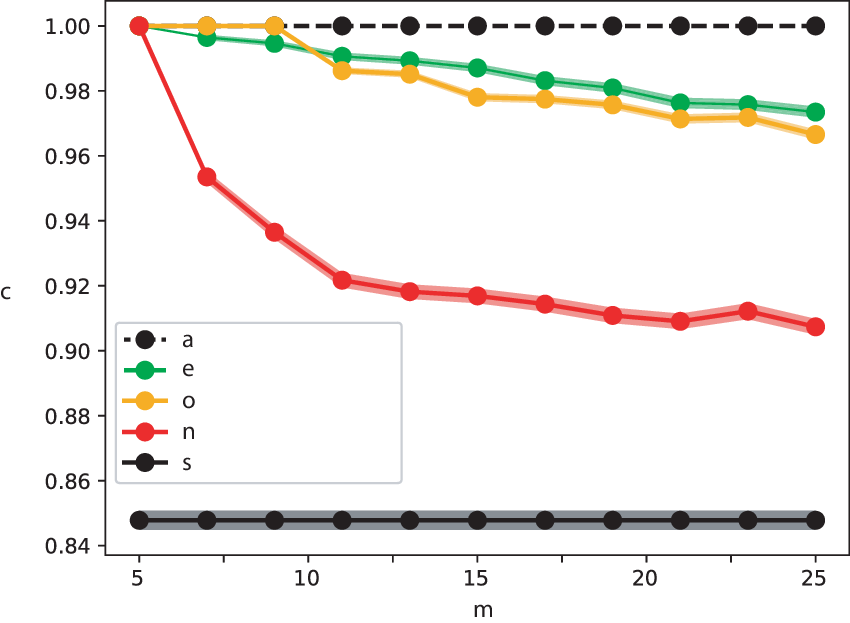}\label{fig:results_edge_a}}\hspace{0.3cm}
\subfigure[]{\includegraphics[height=4.5cm]{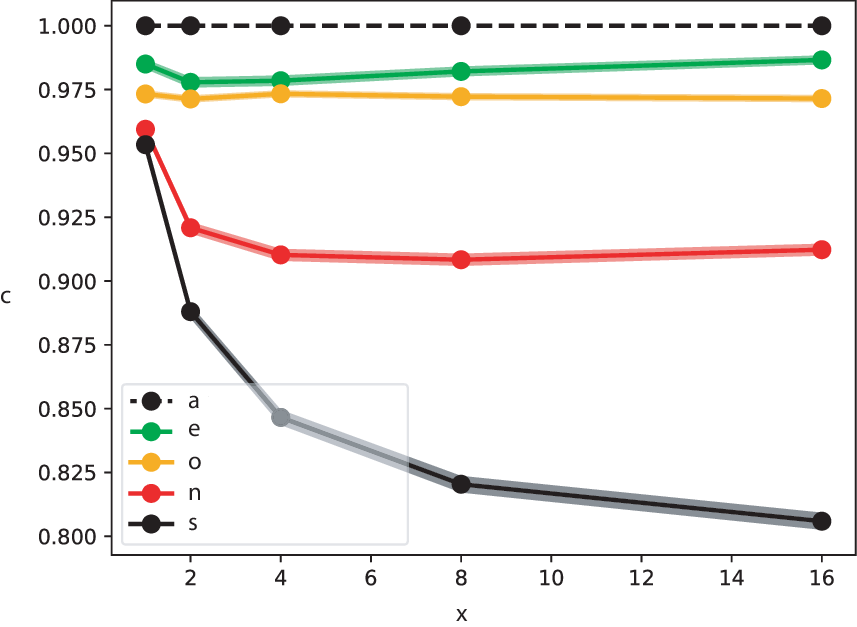}\label{fig:results_edge_b}}
\caption{Performance of the assignment strategies, as measured by normalized waiting time. Each data-point is averaged over 500 runs, and the shaded areas show 95\% confidence intervals. (a) Performance as a function of deployment size $N_{\mathrm{d}}$ for $N=25$. (b) Performance as a function of $K$ paths, $N_{\mathrm{d}} = 20$.
\label{fig:results_edge}}
\end{figure}


Fig.~\ref{fig:results_edge} shows the performance of our algorithm, as measured by the normalized waiting time $J/J_0$. Fig.~\ref{fig:results_edge_a} shows how, as we increase the total robot deployment $N_{\mathrm{d}}$, the waiting time decreases, approaching the lower bound (\emph{Best a-posteriori}). Fig.~\ref{fig:results_edge_b} shows how, as we increase the number of path options $K$ to be considered by the assignment algorithm, performance improves initially, but then flattens out. This validates our usage of a fixed cap ($K$) on the number of paths to be considered by the algorithm. 
We see that any redundant assignment strategy improves upon non-redundant assignment. Our solution \emph{Greedy} improves significantly upon the benchmarks \emph{Random} and \emph{Repeated Hungarian}.

Fig.~\ref{fig:results_diversity} shows the correlation of paths (within robot coalitions) in the solutions found by three strategies (\emph{Greedy}, as well as \emph{Repeated Hungarian} and \emph{Random}). For each robot coalition assigned to one goal, we compute the average pairwise correlation between all pairs of paths found for the robots belonging to that coalition. The latter value is averaged over all coalitions. 
We observe that the correlation of paths within coalitions generated by \emph{Greedy} is lower than that of both \emph{Random} and \emph{Repeated Hungarian}, across all uncertainty distribution correlation values. This indicates that paths selected by \emph{Greedy} tend to be more diverse. 

Figure~\ref{fig:results_prob2} shows results for Problem~\ref{prob:problem2}, where we find a solution to the number of robots needed, $|\mathcal{A}_G|$, as a function of the percent improvement over a non-redundant assignment (Hungarian method). \emph{Greedy} achieves a much higher improvement than \emph{Random} and \emph{Repeated Hungarian}, for the same number of robots deployed.

\begin{figure}[tb]
\centering
\psfrag{a}[lc][][0.7]{Hungarian}
\psfrag{e}[lc][][0.7]{Random}
\psfrag{o}[lc][][0.7]{Repeat. Hung.}
\psfrag{n}[lc][][0.7]{Greedy}
\psfrag{s}[lc][][0.7]{Best a-post.}
\psfrag{c}[cc][][0.7][90]{Path Correlation across Coalitions}
\psfrag{m}[cc][][0.7]{Edge Correlation}
\includegraphics[height=4.5cm]{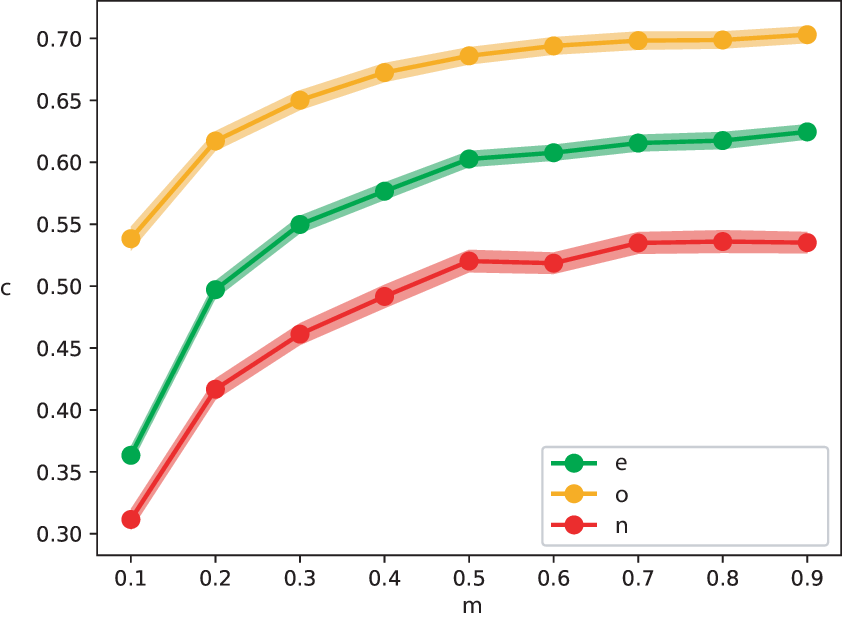}
\caption{Correlation across the paths selected by the robots in redundant coalitions. Each data-point is averaged over 500 runs, over all robot coalitions formed. The shaded areas show a 95\% confidence interval.
\label{fig:results_diversity}}
\end{figure}

\begin{figure}[tb]
\centering
\psfrag{a}[lc][][0.7]{Hungarian}
\psfrag{e}[lc][][0.7]{Random}
\psfrag{o}[lc][][0.7]{Repeat. Hung.}
\psfrag{n}[lc][][0.7]{Greedy}
\psfrag{s}[lc][][0.7]{Best a-post.}
\psfrag{m}[cc][][0.7][90]{Number of robots deployed $|\mathcal{A}_G|$}
\psfrag{i}[cc][][0.7]{\% improvement over Hungarian}
\includegraphics[height=4.5cm]{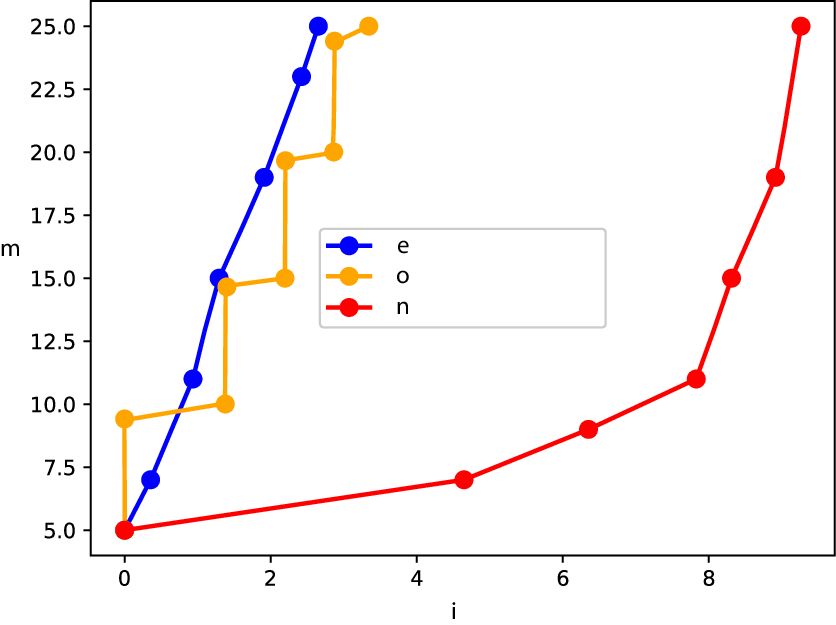}
\caption{The number of robots deployed, $|\mathcal{A}_G|$, as a function of the percent improvement over a non-redundant assignment (Hungarian method). Given a target improvement over the Hungarian method, each data-point represents the average performance across 500 runs on random transport networks.
\label{fig:results_prob2}}
\end{figure}


\section{Discussion} 
\label{sec:discussion}
Although the idea of engineering systems with \emph{redundant resources} to increase reliability, robustness and resilience is not new~\cite{kulturel:2003,ghare:1969}, our ideas in this work provide a different take on the concept of redundancy. In particular, we provide a mathematical framework that allows us to reason about the added value of redundancy for mobile systems where the cost of time comes at the highest premium. Although we do provide an option to limit the number of additional robots to deploy, future work should more explicitly address the trade-off between the cost of providing redundancy (e.g., operational costs, maintenance costs) and performance gains. Within this context, future studies should analyze the economy of on-demand task assignment systems, analyzing how much more users are willing to pay for improved quality of service and reduced waiting times. Richer variants of the problem statement would consider budget constraints as well as heterogeneous robots with different costs. 
A potential limitation of our approach is that we do not explicitly model the redistribution of robots due to the redundant assignment scheme. Whether this robot re-balancing is beneficial remains to be studied.

The results in Sec.~\ref{sec:results_edge} indicate an interesting connection between resilience and diversity: the paths selected by \emph{Greedy} in the redundant robot coalitions tend to be more diverse (and correlate with better performance). This insight is illustrated in Fig.~\ref{fig:results_paths}, which shows two instances of paths selected by a redundant coalition of 5 robots, using \emph{Greedy} in Fig.~\ref{fig:results_paths_a} and \emph{Repeated Hungarian} in Fig.~\ref{fig:results_paths_b}. The \emph{Repeated Hungarian} method sends robots along the same perceived best path, whereas \emph{Greedy} evaluates the added gain for each new redundant robot, and hence, diversifies the selected paths. Although these results provide insights to the algorithm's inner workings, more work needs to be done in order to explicitly exploit the coupling between resilience and diversity. 

Finally, our current work only focuses on static assignment for a given batch of tasks. For task assignment problems with a continuous influx of tasks, our framework would need to be extended. In previous work, we implemented such a mechanism through a sliding window approach that keeps a reserve of robots to accommodate future (unknown) task demands~\cite{prorok:2017iros}. This idea should be further extended to optimize the re-balancing of the robot distribution, as a function of predictions of spatio-temporal demand distributions.

\begin{figure}[tb]
\centering
\psfrag{a}[lc][][0.7]{Hungarian}
\subfigure[]{\includegraphics[width=0.47\columnwidth]{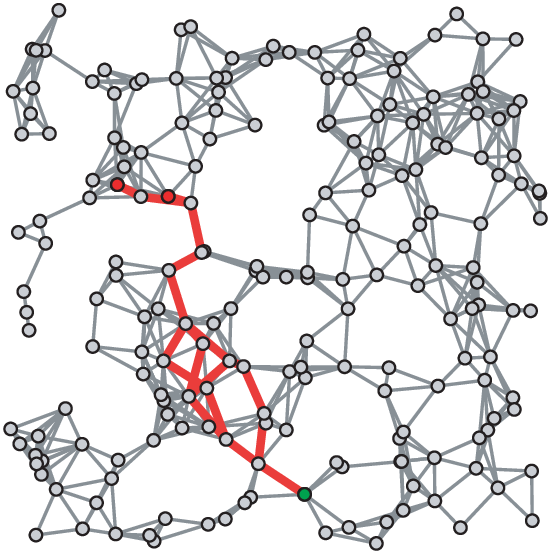}\label{fig:results_paths_a}}\hspace{0.2cm}
\subfigure[]{\includegraphics[width=0.47\columnwidth]{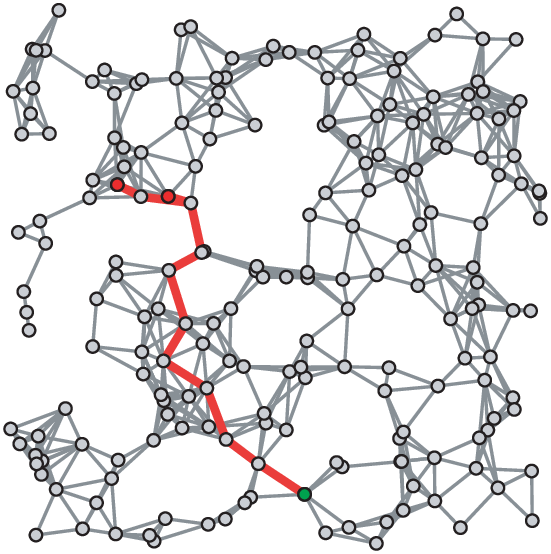}\label{fig:results_paths_b}}
\caption{Paths selected by a robot coalition initially located at two separate hubs (red nodes), and assigned to a goal (green node) for (a) \emph{Greedy} and (b) \emph{Repeated Hungarian}.
\label{fig:results_paths}}
\end{figure}

\section{Conclusion} 
\label{sec:conclusion}
In this work, we provided a framework for robot-to-goal assignment that is resilient to uncertainty over robot travel times. The main novelty is the exploitation of robot redundancy, and the formulation of a supermodular optimization framework that efficiently and near-optimally selects redundant robot matchings to minimize the average waiting time at the goal locations.
Our first-come-first-to-serve principle implies a \emph{minimum} aggregation over redundant assignments. This allows us to compute our objective function efficiently by dynamic programming, leading to a polynomial-time algorithm that can be run in real-time, even for large numbers of robots, goals, and graph nodes.
Our results show that redundant assignment reduces waiting time with respect to non-redundant assignments. This performance gap between redundant and non-redundant assignment increases with increasing noise levels. The proposed redundant assignment algorithm is valid for the general problem of uncertain travel times. In particular, we do not make any explicit assumptions on the underlying uncertainty models.
Finally, our findings include results on the benefit of diversity and complementarity in redundant robot coalitions; these are unprecedented insights within this context, and contribute towards providing resilience to uncertainty through targeted robot team compositions.


\section*{Acknowledgment}
This work is supported by ARL DCIST CRA W911NF-17-2-0181, and by the Centre for Digital Built Britain, under InnovateUK grant number RG96233, for the research project ``Co-Evolving Built Environments and Mobile Autonomy for Future Transport and Mobility''.



\bibliographystyle{abbrvnat}
{\small
\bibliography{Bibliography}

\begin{thebibliography}{35}
\providecommand{\natexlab}[1]{#1}
\providecommand{\url}[1]{\texttt{#1}}
\expandafter\ifx\csname urlstyle\endcsname\relax
  \providecommand{\doi}[1]{doi: #1}\else
  \providecommand{\doi}{doi: \begingroup \urlstyle{rm}\Url}\fi

\bibitem[Ahuja et~al.(2007)Ahuja, Kumar, Jha, and Orlin]{ahuja:2007}
R.~K. Ahuja, A.~Kumar, K.~C. Jha, and J.~B. Orlin.
\newblock Exact and heuristic algorithms for the weapon-target assignment
  problem.
\newblock \emph{Operations Research}, 55\penalty0 (6):\penalty0 1136--1146,
  2007.

\bibitem[Atamt{\"u}rk et~al.(1996)Atamt{\"u}rk, Nemhauser, and
  Savelsbergh]{atamturk:1996}
A.~Atamt{\"u}rk, G.~L. Nemhauser, and M.~W. Savelsbergh.
\newblock A combined lagrangian, linear programming, and implication heuristic
  for large-scale set partitioning problems.
\newblock \emph{Journal of heuristics}, 1\penalty0 (2):\penalty0 247--259,
  1996.

\bibitem[Badanidiyuru and Vondr{\'a}k(2014)]{badanidiyuru:2014}
A.~Badanidiyuru and J.~Vondr{\'a}k.
\newblock Fast algorithms for maximizing submodular functions.
\newblock In \emph{Proceedings of the twenty-fifth annual ACM-SIAM symposium on
  Discrete algorithms}, pages 1497--1514. Society for Industrial and Applied
  Mathematics, 2014.

\bibitem[Banerjee et~al.(2001)Banerjee, Konishi, and S{\"o}nmez]{banerjee:2001}
S.~Banerjee, H.~Konishi, and T.~S{\"o}nmez.
\newblock Core in a simple coalition formation game.
\newblock \emph{Social Choice and Welfare}, 18\penalty0 (1):\penalty0 135--153,
  2001.

\bibitem[Cho and Linderoth(2015)]{cho:2015}
N.~Cho and J.~Linderoth.
\newblock Row-partition branching for set partitioning problems.
\newblock In \emph{Proceedings of the INFORMS Computing Society Meeting}, pages
  119--133, 2015.

\bibitem[Clark et~al.(2014)Clark, Bushnell, and Poovendran]{clark:2014}
A.~Clark, L.~Bushnell, and R.~Poovendran.
\newblock A supermodular optimization framework for leader selection under link
  noise in linear multi-agent systems.
\newblock \emph{IEEE Transactions on Automatic Control}, 59\penalty0
  (2):\penalty0 283--296, 2014.

\bibitem[Enright and Wurman(2011)]{enright:2011}
J.~Enright and P.~R. Wurman.
\newblock Optimization and coordinated autonomy in mobile fulfillment systems.
\newblock In \emph{Automated action planning for autonomous mobile robots},
  pages 33--38, 2011.

\bibitem[Feige(1998)]{feige:1998}
U.~Feige.
\newblock A {Threshold} of {Ln} {N} for {Approximating} {Set} {Cover}.
\newblock \emph{J. ACM}, 45\penalty0 (4):\penalty0 634--652, July 1998.
\newblock ISSN 0004-5411.
\newblock \doi{10.1145/285055.285059}.
\newblock URL \url{http://doi.acm.org/10.1145/285055.285059}.

\bibitem[Filmus and Ward(2014)]{filmus:2014}
Y.~Filmus and J.~Ward.
\newblock Monotone submodular maximization over a matroid via non-oblivious
  local search.
\newblock \emph{SIAM Journal on Computing}, 43\penalty0 (2):\penalty0 514--542,
  2014.

\bibitem[Fisher et~al.(1978)Fisher, Nemhauser, and Wolsey]{fisher:1978}
M.~L. Fisher, G.~L. Nemhauser, and L.~A. Wolsey.
\newblock An analysis of approximations for maximizing submodular set
  functions-{II}.
\newblock \emph{Polyhedral combinatorics}, pages 73--87, 1978.

\bibitem[Fujishige(2005)]{fujishige:2005}
S.~Fujishige.
\newblock \emph{Submodular Functions and Optimization}, volume~58.
\newblock Elsevier, 2005.

\bibitem[Garey and Johnson(1978)]{garey:1978}
M.~R. Garey and D.~S. Johnson.
\newblock ``{S}trong'' {NP}-completeness results: Motivation, examples, and
  implications.
\newblock \emph{Journal of the ACM (JACM)}, 25\penalty0 (3):\penalty0 499--508,
  1978.

\bibitem[Gerkey and Mataric(2004)]{Gerkey:2004il}
B.~P. Gerkey and M.~J. Mataric.
\newblock {A Formal Analysis and Taxonomy of Task Allocation in Multi-Robot
  Systems}.
\newblock \emph{Interntional Journal of Robotics Research}, 23\penalty0
  (9):\penalty0 939--954, 2004.

\bibitem[Ghare and Taylor(1969)]{ghare:1969}
P.~Ghare and R.~Taylor.
\newblock Optimal redundancy for reliability in series systems.
\newblock \emph{Operations research}, 17\penalty0 (5):\penalty0 838--847, 1969.

\bibitem[Golovin and Krause(2011)]{golovin:2011}
D.~Golovin and A.~Krause.
\newblock Adaptive submodularity: Theory and applications in active learning
  and stochastic optimization.
\newblock \emph{Journal of Artificial Intelligence Research}, 42:\penalty0
  427--486, 2011.

\bibitem[Grippa et~al.(2017)Grippa, Behrens, Bettstetter, and
  Wall]{grippa:2017}
P.~Grippa, D.~A. Behrens, C.~Bettstetter, and F.~Wall.
\newblock Job selection in a network of autonomous uavs for delivery of goods.
\newblock \emph{Robotics: Science and Systems}, 2017.

\bibitem[Jorgensen et~al.(2017)Jorgensen, Chen, Milam, and
  Pavone]{jorgensen:2017}
S.~Jorgensen, R.~H. Chen, M.~B. Milam, and M.~Pavone.
\newblock The matroid team surviving orienteers problem: Constrained routing of
  heterogeneous teams with risky traversal.
\newblock In \emph{2017 IEEE/RSJ International Conference on Intelligent Robots
  and Systems (IROS)}, pages 5622--5629, 2017.
\newblock \doi{10.1109/IROS.2017.8206450}.

\bibitem[Krokhmal and Pardalos(2009)]{krokhmal:2009}
P.~A. Krokhmal and P.~M. Pardalos.
\newblock Random assignment problems.
\newblock \emph{European Journal of Operational Research}, 194\penalty0
  (1):\penalty0 1--17, 2009.

\bibitem[Kulturel-Konak et~al.(2003)Kulturel-Konak, Smith, and
  Coit]{kulturel:2003}
S.~Kulturel-Konak, A.~E. Smith, and D.~W. Coit.
\newblock Efficiently solving the redundancy allocation problem using tabu
  search.
\newblock \emph{IIE transactions}, 35\penalty0 (6):\penalty0 515--526, 2003.

\bibitem[Lehmann et~al.(2001)Lehmann, Lehmann, and Nisan]{lehmann:2001}
B.~Lehmann, D.~Lehmann, and N.~Nisan.
\newblock Combinatorial auctions with decreasing marginal utilities.
\newblock In \emph{Proceedings of the 3rd ACM conference on Electronic
  Commerce}, pages 18--28. ACM, 2001.

\bibitem[Nam and Shell(2015)]{Nam:2015hz}
C.~Nam and D.~A. Shell.
\newblock {When to do your own thing: Analysis of cost uncertainties in
  multi-robot task allocation at run-time}.
\newblock In \emph{IEEE International Conference on Robotics and Automation
  (ICRA)}, pages 1249--1254, 2015.

\bibitem[Nam and Shell(2017)]{Nam:2017jwa}
C.~Nam and D.~A. Shell.
\newblock {Analyzing the sensitivity of the optimal assignment in probabilistic
  multi-robot task allocation}.
\newblock \emph{IEEE Robotics and Automation Letters}, 2\penalty0 (1):\penalty0
  193--200, 2017.

\bibitem[Ouimet and Cort{\'e}s(2013)]{ouimet:2013}
M.~Ouimet and J.~Cort{\'e}s.
\newblock Hedonic coalition formation for optimal deployment.
\newblock \emph{Automatica}, 49\penalty0 (11):\penalty0 3234--3245, 2013.

\bibitem[Oxley(2006)]{oxley2006matroid}
J.~G. Oxley.
\newblock \emph{Matroid theory}, volume~3.
\newblock Oxford University Press, USA, 2006.

\bibitem[Palpanas et~al.(2002)Palpanas, Sidle, Cochrane, and
  Pirahesh]{palpanas:2002}
T.~Palpanas, R.~Sidle, R.~Cochrane, and H.~Pirahesh.
\newblock Incremental maintenance for non-distributive aggregate functions.
\newblock In \emph{Proceedings of the 28th International Conference on Very
  Large Data Bases}, pages 802--813, 2002.

\bibitem[Pavone et~al.(2012)Pavone, Smith, Frazzoli, and
  Rus]{pavone2012robotic}
M.~Pavone, S.~L. Smith, E.~Frazzoli, and D.~Rus.
\newblock Robotic load balancing for mobility-on-demand systems.
\newblock \emph{The International Journal of Robotics Research}, 31\penalty0
  (7):\penalty0 839--854, 2012.

\bibitem[Prorok(2019)]{prorok:2019DARS}
A.~Prorok.
\newblock Redundant robot assignment on graphs with uncertain edge costs.
\newblock In \emph{Distributed Autonomous Robotic Systems}, pages 313--327.
  Springer, 2019.

\bibitem[Prorok and Kumar(2017)]{prorok:2017iros}
A.~Prorok and V.~Kumar.
\newblock Privacy-preserving vehicle assignment for mobility-on-demand systems.
\newblock In \emph{2017 IEEE/RSJ International Conference on Intelligent Robots
  and Systems (IROS)}, pages 1869--1876, Sept 2017.
\newblock \doi{10.1109/IROS.2017.8206003}.

\bibitem[Singh et~al.(2009)Singh, Krause, Guestrin, and Kaiser]{singh:2009}
A.~Singh, A.~Krause, C.~Guestrin, and W.~J. Kaiser.
\newblock Efficient informative sensing using multiple robots.
\newblock \emph{Journal of Artificial Intelligence Research}, 34:\penalty0
  707--755, 2009.

\bibitem[Singh et~al.(2017)Singh, Chen, Carlone, Karaman, Frazzoli, and
  Hsu]{singh:2017}
P.~Singh, M.~Chen, L.~Carlone, S.~Karaman, E.~Frazzoli, and D.~Hsu.
\newblock Supermodular mean squared error minimization for sensor scheduling in
  optimal kalman filtering.
\newblock In \emph{American Control Conference (ACC), 2017}, pages 5787--5794.
  IEEE, 2017.

\bibitem[Spieser et~al.(2016)Spieser, Samaranayake, Gruel, and
  Frazolli]{spieser2016shared}
K.~Spieser, S.~Samaranayake, W.~Gruel, and E.~Frazolli.
\newblock Shared-vehicle mobility-on-demand systems: a fleet operator's guide
  to rebalancing empty vehicles.
\newblock In \emph{Transportation Research Board}, 2016.

\bibitem[Tzoumas et~al.(2016)Tzoumas, Jadbabaie, and Pappas]{tzoumas:2016}
V.~Tzoumas, A.~Jadbabaie, and G.~J. Pappas.
\newblock Sensor placement for optimal {Kalman} filtering: {Fundamental}
  limits, submodularity, and algorithms.
\newblock In \emph{2016 {American} {Control} {Conference} ({ACC})}, pages
  191--196, July 2016.
\newblock \doi{10.1109/ACC.2016.7524914}.

\bibitem[Umetani(2015)]{umetani:2015}
S.~Umetani.
\newblock Exploiting variable associations to configure efficient local search
  in large-scale set partitioning problems.
\newblock In \emph{AAAI}, pages 1226--1232, 2015.

\bibitem[Vondr{\'a}k(2008)]{vondrak:2008}
J.~Vondr{\'a}k.
\newblock Optimal approximation for the submodular welfare problem in the value
  oracle model.
\newblock In \emph{Proceedings of the fortieth annual ACM symposium on Theory
  of computing}, pages 67--74. ACM, 2008.

\bibitem[Williams et~al.(2017)Williams, Gasparri, and Ulivi]{williams:2017}
R.~K. Williams, A.~Gasparri, and G.~Ulivi.
\newblock Decentralized matroid optimization for topology constraints in
  multi-robot allocation problems.
\newblock In \emph{2017 IEEE International Conference on Robotics and
  Automation (ICRA)}, pages 293--300. IEEE, 2017.

\end{thebibliography}
}


\end{document}